\begin{document}

\title{\vspace{-30pt} Generalization in Deep Learning \vspace{17pt}}

\author{Kenji Kawaguchi \\ MIT \and Leslie Pack Kaelbling \\ MIT 
        \and  Yoshua Bengio \\ University of Montreal \vspace{5pt} \normalsize}

\date{}

\maketitle

\begin{abstract}
This paper provides theoretical insights into why and how deep learning can generalize well, despite its large capacity, complexity, possible algorithmic instability, nonrobustness, and sharp minima, responding to an open question in the literature. We also discuss approaches to provide non-vacuous  generalization guarantees for deep learning. Based on theoretical  observations, we  propose new open problems and  discuss the limitations of our results. 
\end{abstract}

\let\thefootnote\relax\footnotetext{BibTeX of this paper is available at: \href{https://people.csail.mit.edu/kawaguch/bibtex.html}{\textcolor{blue}{https://people.csail.mit.edu/kawaguch/bibtex.html}} }

\section{Introduction}
Deep learning has seen significant  practical success and has had a profound impact on the conceptual bases of machine learning and artificial intelligence. Along with its practical success, the theoretical properties of deep learning have been a subject of active investigation. For \textit{expressivity} of neural networks, there are classical results regarding their universality \citep{leshno1993multilayer} and   exponential advantages over hand-crafted features  \citep{barron1993universal}.  Another series of theoretical studies  have considered how \textit{trainable} (or optimizable) deep hypothesis spaces are, revealing structural properties that may enable  non-convex optimization \citep{choromanska2015loss,kawaguchi2016deep}. However, merely having an \textit{expressive} and \textit{trainable} hypothesis space does not guarantee  good performance in predicting the values of future inputs, because of possible over-fitting to training data. This  leads to the study of \textit{generalization}, which is the focus of this paper.   

Some classical theory work attributes generalization ability to the use of a low-capacity class of hypotheses  \citep{vapnik1998statistical,mohri2012foundations}.
From the viewpoint of compact representation, which is related to  small capacity, it has been shown that deep hypothesis spaces have an exponential advantage over shallow hypothesis spaces for representing some classes of natural target functions  \citep{pascanu2013number,montufar2014number,livni2014computational,telgarsky2016benefits, poggio2017review}. In other words, when some assumptions implicit in the hypothesis space (e.g., deep composition of piecewise linear transformations) are approximately satisfied by the target function, one can achieve  very good generalization, compared to methods that do not rely on that assumption. However, a recent paper \citep{zhang2016understanding} has empirically shown that successful deep hypothesis spaces have sufficient capacity  to memorize random labels. This observation has been called an  ``apparent paradox''  and has led to active discussion by many    researchers \citep{arpit2017closer,krueger2017deep,hoffer2017train,wu2017towards,dziugaite2017computing,dinh2017sharp}.  \citet{zhang2016understanding} concluded with an open problem stating that  understanding such observations require rethinking generalization, while \citet{dinh2017sharp} stated that explaining why deep learning models can generalize well, despite  their overwhelming capacity, is an open area of research.

We begin, in Section \ref{sec:rethinking_ML}, by illustrating that, even in the case of linear models,  hypothesis spaces with overwhelming capacity  can result in arbitrarily small test errors and expected risks. Here, \textit{test error} is  the error of a learned hypothesis on data that it was not trained on, but which is often drawn from the same distribution.  Test error is a measure of how well the hypothesis generalizes to new data.
We closely examine this phenomenon, extending the original open problem from previous papers \citep{zhang2016understanding,dinh2017sharp} into a new open problem that strictly includes the original. We reconcile the possible apparent paradox by checking theoretical consistency and  identifying a difference in the underlying assumptions. Considering a difference in the focuses of theory and practice, we outline possible practical roles that generalization theory can play.   

Towards addressing these issues, Section
\ref{sec:generalization-training-validation} presents  generalization bou\-nds based on validation datasets, which can provide non-vacuous and numeri\-cally-tight generalization guarantees for deep learning in general. 
Section \ref{sec:theoretical_concern} analyzes generalization errors based on training datasets, focusing on a specific case of feed-forward neural
networks with ReLU units and max-pooling. Under these conditions, the developed   theory provides quantitatively tight       theoretical insights into the generalization behavior of neural networks.

\section{Background} \label{sec:preliminaries}
Let $x \in \mathcal{X}$ be an input and $y \in \mathcal{Y}$ be a target. Let ${\mathcal L}$ be a loss function. Let ${\mathcal R}[f]$ be the expected risk  of a function $f$, ${\mathcal R}[f]=\EE_{x,y\sim \PP_{(X,Y)}}[{\mathcal L}(f(x),y)]$, where $ \PP_{(X,Y)}$ is the true distribution. Let $f_{\mathcal A(S)}:\mathcal{X}  \rightarrow  \mathcal{Y}$ be a model learned by a learning algorithm $\mathcal A$ (including random seeds for simplicity) using a training dataset $S :=S_{m}:=\{(x_{1},y_{1}),\dots,(x_{m},y_{m})\}$ of size $m$. Let  ${\mathcal R}_{S}[f]$ be the empirical risk of $f$ as  ${\mathcal R}_{S}[f]=\frac{1}{m}\sum_{i=1}^m {\mathcal L}(f(x_i),y_i)$ with $\{(x_i,y_i)\}_{i=1}^m=S.$  Let $\mathcal F$ be a set of functions endowed with some structure or equivalently a \textit{hypothesis space}. Let $\mathcal L_{\mathcal F}$ be a family of loss functions associated with ${\mathcal F}$, defined by $\mathcal L_{\mathcal F}=\{g:f\in {\mathcal F}, g(x,y)= {\mathcal L}(f(x),y)\}$.
All vectors are \textit{column} vectors in this paper. For any given variable $v$, let $d_v$ be the dimensionality of the variable $v$.

A goal in machine learning is typically framed as the minimization of the expected risk ${\mathcal R}[f_{\mathcal A(S)}]$. We typically aim to minimize the non-computable expected risk ${\mathcal R}[f_{\mathcal A(S)}]$ by minimizing the computable  empirical risk ${\mathcal R}_{S}[\allowbreak f_{\mathcal A(S)}]$ (i.e., empirical risk minimization). One goal of  generalization theory is to explain and justify when and how minimizing ${\mathcal R}_{S}[f_{\mathcal A(S)}]$ is a sensible approach to minimizing ${\mathcal R}[f_{\mathcal A(S)}]$ by analyzing 
$$
\text{the generalization gap} := {\mathcal R}[f_{\mathcal A(S)}] - {\mathcal R}_{S}[f_{\mathcal A(S)}]. 
$$ 
 In this section only, we use the typical assumption  that $S$ is  generated by independent and identically distributed (i.i.d.) draws according to the true distribution $\PP_{(X,Y)}$; the following sections of this paper do not utilize this assumption. Under this assumption, a primary challenge of analyzing the generalization gap stems  from the \textit{dependence} of $f_{\mathcal A(S)}$ on the same dataset $S$ used in the definition of ${\mathcal R}_{S}$. 
Several  approaches in  \textit{statistical learning theory} have been developed to handle this dependence.

The \textit{hypothesis-space complexity} approach handles this dependence by decoupling $f_{\mathcal A (S)}$ from the particular $S$ by considering the worst-case gap for functions in the hypothesis space as 
$$
{\mathcal R}[f_{\mathcal A(S)}] - {\mathcal R}_{S}[f_{\mathcal A(S)}] \le \sup_{f\in {\mathcal F}} {\mathcal R}[f] - {\mathcal R}_{S}[f], 
$$    
and by carefully analyzing the right-hand side. Because the cardinality of ${\mathcal F}$ is typically (uncountably) infinite, a direct use of the union bound over all elements in ${\mathcal F}$ yields  a vacuous bound, leading to the need to  consider different quantities to characterize ${\mathcal F}$; e.g.,  Rademacher complexity and  the Vapnik--Chervonenkis  (VC) dimension. For example, if the codomain  of ${\mathcal L}$ is in $[0,1]$, we have \citep[Theorem 3.1]{mohri2012foundations} that for any $\delta >0$, with probability at least $1-\delta$,    
$$
\sup_{f\in {\mathcal F}} {\mathcal R}[f] - {\mathcal R}_{S}[f]  \le 2 \mathfrak{R}_m (\mathcal L_{{\mathcal F}}) + \sqrt\frac{\ln \frac{1}{\delta}}{2m},  
$$
where $ \mathfrak{R}_m (\mathcal L_{{\mathcal F}})$ is the Rademacher complexity of $\mathcal L_{{\mathcal F}}$, which then can be bounded by the Rademacher complexity of ${\mathcal F}$,  $\mathfrak{R}_m ({\mathcal F})$. For the  deep-learning hypothesis spaces ${\mathcal F}$, there are several  well-known bounds on $\mathfrak{R}_m ({\mathcal F})$ including those with explicit exponential dependence on depth \citep{sun2016depth,neyshabur2015norm,xie2015generalization} and   explicit linear dependence on the number of trainable parameters \citep{shalev2014understanding}. There has been significant  work on improving the bounds in this approach, but all existing solutions with this approach still depend on the complexity of a hypothesis space or a sequence of hypothesis spaces.

The \textit{stability} approach deals with the dependence of $f_{\mathcal A(S)}$ on the dataset $S$   by considering the \textit{stability} of algorithm $\mathcal A$ with respect to different datasets. The considered stability is a measure of how much changing a data point in $S$ can change $f_{\mathcal A(S)}$.  For example, if the algorithm $\mathcal A$ has uniform stability $\beta$ (w.r.t. ${\mathcal L}$) and if the codomain  of ${\mathcal L}$ is in $[0,M]$, we have \citep{bousquet2002stability} that for any $\delta >0$, with probability at least $1-\delta$, 
$$
{\mathcal R}[f_{\mathcal A(S)}] - {\mathcal R}_{S}[f_{\mathcal A(S)}] \le 2\beta +(4m\beta + M) \sqrt \frac{\ln \frac{1}{\delta}}{2m}.
$$   
Based on previous work on stability (e.g., \citealt{hardt2015train,kuzborskij2017data,gonen2017fast}), one may conjecture some reason for generalization in deep learning.     

The \textit{robustness} approach   avoids dealing with  certain details of the dependence of $f_{\mathcal A(S)}$ on $S$ by  
considering the robustness of algorithm $\mathcal A$ for all possible datasets. In contrast to stability, robustness is the measure of how much the loss value can vary w.r.t. \textit{the input space} of $(x,y)$.  For example, if algorithm $\mathcal A$ is $(\Omega,\zeta(\cdot))$-robust and the codomain of ${\mathcal L}$ is upper-bounded by $M$,  given a dataset $S$, we have \citep{xu2012robustness} that  for any $\delta>0$, with probability at least $1-\delta$, 
$$ 
|{\mathcal R}[f_{\mathcal A(S)}] - {\mathcal R}_{S}[f_{\mathcal A(S)}]| \le \zeta(S) + M \sqrt \frac{2 \Omega \ln 2 + 2 \ln \frac{1}{\delta}}{m}.
$$
The robustness approach requires an \textit{a priori known and fixed} partition of the input space such that the number of  sets in the partition is $\Omega$ and the change of loss values in each set of the partition is bounded by $\zeta(S)$ \textit{for all} $S$ (Definition 2 and the proof of Theorem 1 in \citealt{xu2012robustness}). In classification, if the \textit{margin}  is ensured to be large, we can fix the partition with balls of  radius corresponding to the large \textit{margin}, filling the input space.  Recently, this idea was applied to deep learning \citep{sokolic2017generalization,sokolic2017robust}, producing insightful   and effective generalization bounds,  while   still suffering from the curse of the dimensionality of the priori-known  fixed  input manifold.   
 
With regard to the above  approaches, \textit{flat minima} can be    viewed  as the concept of low variation in the \textit{parameter space}; i.e., a small perturbation in  the parameter space around a solution results in a small change in the loss surface.   Several studies  have provided arguments for generalization in deep learning based on  flat minima  \citep{keskar2017large}. However, \citet{dinh2017sharp}
showed that flat minima in practical deep learning hypothesis spaces can be turned into sharp minima via re-parameterization without affecting the generalization gap, indicating that it requires further investigation.

\section{Rethinking generalization  } \label{sec:rethinking_ML}

\citet{zhang2016understanding} empirically demonstrated that several deep hypothesis spaces can memorize random labels, while having the ability to produce zero training error and small test errors for particular natural datasets (e.g., CIFAR-10). They also empirically observed that  regularization on the norm of weights seemed to be unnecessary to obtain small test errors,  in contradiction to conventional  wisdom. These observations suggest the    following open problem:

\vspace{6pt}
\noindent \textbf{Open Problem 1.} Tightly characterize the expected risk ${\mathcal R}[f]$ or the generalization gap  ${\mathcal R}[f] - {\mathcal R}_{S}[f]$ with a sufficiently complex deep-learning hypothesis space ${\mathcal F} \ni f$, producing theoretical insights and  distinguishing the case of ``natural'' problem instances $(\PP_{(X,Y)},S)$ (e.g.,  images with natural labels) from the case of other problem instances  $(\PP_{(X,Y)}',S')$ (e.g., images with random labels).
\vspace{6pt}

Supporting and extending the empirical observations by \citet{zhang2016understanding}, we provide a theorem (Theorem \ref{thm:linear-couter}) stating  that the hypothesis space of over-parameterized linear models can memorize any training data \textit{and} decrease the training and test errors arbitrarily close to zero (including to zero) \textit{with} the norm of parameters being arbitrarily large, \textit{even when} the parameters are arbitrarily far from the ground-truth parameters. Furthermore,   Corollary \ref{coro:linear-counter} shows that conventional wisdom regarding the norm of the parameters $w$ can fail to explain  generalization, even in linear models that might seem not to be over-parameterized. 
All proofs for this paper are presented in the appendix.

\begin{theorem} 
\label{thm:linear-couter} 
Consider a linear model with the training prediction  $\hat Y(w)=\Phi w\in \RR^{m \times d_y}$, where $\Phi \in \RR^{m \times n}$ is a fixed feature matrix of the training inputs. Let $\hat Y_{\test}(w)=\Phi_\test w \in \RR^{m_{\test} \times d_y}$  be the test prediction, where $\Phi_{\test}\in \RR^{m_{\test} \times n}$ is a fixed  feature matrix of the test inputs.
Let $M=[\Phi^\top, \Phi_{\test}^\top]^\top$. Then, if $n > m$ and if $\rank(\Phi) = m$ and $\rank(M) < n$, 
\begin{enumerate}[label=(\roman*)] 
\item \label{thm:linear-counter-i} 
For any   $Y \in \RR^{m \times d_y}$, there exists a parameter $w'$ such that $\hat Y(w')= Y$, and  
\item \label{thm:linear-counter-ii}
if there exists a ground truth $w^*$ satisfying $Y=\Phi w^*$ and $Y_\test=\Phi_{\test} w^*$, then for any $\epsilon, \delta \ge 0$, there exists a parameter $w$ such that 

\begin{enumerate}[label=(\alph*)] 
\item  \label{thm:linear-counter-ii-a}
 $\hat Y(w)= Y+ \epsilon A$ for some matrix $A$ with $\|A\|_F\le1$, and
\item  \label{thm:linear-counter-ii-b}
$\hat Y_\test(w) = Y_\test + \epsilon B$ for some matrix $B$ with $\|B\|_F\le1$, and  
\item  \label{thm:linear-counter-ii-c} 
 $\|w\|_F\ge \delta$ and $\|w-w^*\|_F \ge \delta$.
\end{enumerate}
\end{enumerate} 
\end{theorem}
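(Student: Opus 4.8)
The plan is to exploit the two rank hypotheses directly: $\rank(\Phi)=m$ gives surjectivity of the training map, which handles interpolation, while $\rank(M)<n$ produces a direction common to the null spaces of $\Phi$ and $\Phi_\test$ along which I can push the parameter arbitrarily far without disturbing either the training or the test predictions.

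For part \ref{thm:linear-counter-i}, I would first note that $\rank(\Phi)=m$ means $\Phi\in\RR^{m\times n}$ has full row rank, so the map $u\mapsto\Phi u$ from $\RR^n$ onto $\RR^m$ is surjective. Hence for each of the $d_y$ columns $y_j$ of $Y$ the system $\Phi u=y_j$ is solvable, and stacking these solutions as the columns of $w'$ yields $\Phi w'=Y$. Concretely one may take $w'=\Phi^\top(\Phi\Phi^\top)^{-1}Y$, where $\Phi\Phi^\top$ is invertible precisely because $\rank(\Phi)=m$. This step is routine.

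The substance is in part \ref{thm:linear-counter-ii}. The key observation is that $\rank(M)<n$ forces a nontrivial common null space: writing $M=[\Phi^\top,\Phi_\test^\top]^\top$, any $v$ with $Mv=0$ satisfies both $\Phi v=0$ and $\Phi_\test v=0$, and since $\rank(M)<n$ I can choose such a $v\neq 0$. I would then set $w=w^*+v c^\top$ for a vector $c\in\RR^{d_y}$ to be fixed later, so that the perturbation $vc^\top\in\RR^{n\times d_y}$ lies entirely in directions invisible to both feature maps. Then $\Phi w=\Phi w^*+(\Phi v)c^\top=Y$ and likewise $\Phi_\test w=Y_\test$ exactly, so \ref{thm:linear-counter-ii-a} and \ref{thm:linear-counter-ii-b} hold with $A=B=0$ (hence $\|A\|_F,\|B\|_F\le 1$) for every $\epsilon\ge 0$.

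It then remains to satisfy \ref{thm:linear-counter-ii-c} by scaling $c$. Since $w-w^*=vc^\top$, I have $\|w-w^*\|_F=\|v\|_2\,\|c\|_2$, and by the triangle inequality $\|w\|_F\ge \|v\|_2\,\|c\|_2-\|w^*\|_F$. As $v\neq 0$, taking $\|c\|_2$ large enough makes both quantities exceed any prescribed $\delta\ge 0$ simultaneously, completing the construction. The only step requiring genuine care, and thus the main obstacle, is extracting the shared null-space direction $v$ from the single hypothesis $\rank(M)<n$ and confirming it annihilates $\Phi$ and $\Phi_\test$ at once; everything downstream is scaling and the triangle inequality. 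I would also remark that the freedom in $\epsilon$ is not actually needed, since this construction attains exact interpolation of both $Y$ and $Y_\test$; the $\epsilon$ slack only weakens the statement formally while emphasizing that arbitrarily large parameter norm coexists with arbitrarily small training and test error.
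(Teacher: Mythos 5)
Your proof is correct and rests on the same core mechanism as the paper's: the hypothesis $\rank(M)<n$ yields a nontrivial common null space of $\Phi$ and $\Phi_\test$, along which the parameter can be scaled without bound while leaving both the training and test predictions fixed, and part \ref{thm:linear-counter-i} follows from full row rank exactly as in the paper. The only difference is cosmetic: the paper decomposes $w^*$ into row-space and null-space components and adds a row-space term $\epsilon C_1$ so that the realized perturbations $\epsilon A$ and $\epsilon B$ can be nonzero, whereas you take $A=B=0$, which the statement permits since it only requires $\|A\|_F\le 1$ and $\|B\|_F\le 1$.
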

\begin{corollary} \label{coro:linear-counter}
If $n \le m$ and if $\rank(M) < n$, then statement \ref{thm:linear-counter-ii} in Theorem \ref{thm:linear-couter} holds.
\end{corollary}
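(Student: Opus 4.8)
The plan is to recognize that the conclusion of the corollary is verbatim statement \ref{thm:linear-counter-ii} of Theorem \ref{thm:linear-couter}, so I only need to re-examine which of the theorem's hypotheses that statement actually consumes. The assumptions $n>m$ and $\rank(\Phi)=m$ are exactly what make $\Phi$ surjective onto $\RR^{m}$, and that surjectivity is used only to interpolate an \emph{arbitrary} label matrix $Y$ in part \ref{thm:linear-counter-i}. Part \ref{thm:linear-counter-ii}, by contrast, starts from an \emph{assumed} ground truth $w^{*}$ with $Y=\Phi w^{*}$ and $Y_{\test}=\Phi_{\test}w^{*}$, so no interpolation is needed; the only structural ingredient it requires is a nontrivial vector in the null space of $M$. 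Hence the first step is to note that $\rank(M)<n$, which is retained in the corollary's hypotheses, already furnishes such a vector: since $M\in\RR^{(m+m_{\test})\times n}$ has rank strictly less than $n$, there is a nonzero $v\in\ker(M)$, and because $M$ stacks $\Phi$ above $\Phi_{\test}$, $Mv=0$ is equivalent to the pair $\Phi v=0$ and $\Phi_{\test}v=0$.

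Given such a $v$, the construction is to perturb the ground truth inside this common null space. I would set $w=w^{*}+v u^{\top}$ for a vector $u\in\RR^{d_y}$ to be fixed later, so that $w-w^{*}=vu^{\top}$ is rank one. Then $\Phi w=\Phi w^{*}+(\Phi v)u^{\top}=Y$ and likewise $\Phi_{\test}w=Y_{\test}$ \emph{exactly}, which means conditions \ref{thm:linear-counter-ii-a} and \ref{thm:linear-counter-ii-b} hold with $A=B=0$ (so $\|A\|_F=\|B\|_F=0\le 1$) for every $\epsilon\ge 0$. It remains to force the two norm lower bounds in \ref{thm:linear-counter-ii-c}. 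Using $\|vu^{\top}\|_F=\|v\|_2\|u\|_2$ gives $\|w-w^{*}\|_F=\|v\|_2\|u\|_2$, while the triangle inequality gives $\|w\|_F\ge \|v\|_2\|u\|_2-\|w^{*}\|_F$; choosing $\|u\|_2\ge(\delta+\|w^{*}\|_F)/\|v\|_2$, which is legitimate because $v\ne 0$, makes both quantities at least $\delta$ simultaneously.

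I do not expect a genuine obstacle here---the content is entirely in the bookkeeping of the previous paragraph---so the main thing to get right is to confirm that the argument never silently invoked $n>m$ or $\rank(\Phi)=m$. The one point worth double-checking is that the equivalence $\ker(M)=\ker(\Phi)\cap\ker(\Phi_{\test})$ is what lets a single perturbation preserve both the training and the test predictions at once; without it one could kill the training error while disturbing the test error. The degenerate cases ($\delta=0$, handled by $u=0$; $\epsilon=0$, already covered since the predictions are exact) require no special treatment, and the conceptual takeaway to flag is that part \ref{thm:linear-counter-ii} is driven solely by the deficiency $\rank(M)<n$ rather than by over-parameterization $n>m$, which is precisely why it survives into the regime $n\le m$ of the corollary.
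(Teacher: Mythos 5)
Your proof is correct, and its first paragraph is precisely the paper's own argument: the paper proves the corollary in one line by observing that the hypotheses $n > m$ and $\rank(\Phi) = m$ are consumed only by statement \ref{thm:linear-counter-i}, while statement \ref{thm:linear-counter-ii} rests entirely on $\rank(M) < n$. Where you diverge is in then rebuilding the construction from scratch rather than pointing back to the theorem's proof, and your construction is a genuine simplification of the paper's: the paper decomposes $w^* = w_1^* + w_2^*$ along the row space and null space of $M$ and sets $w = w_1^* + \epsilon C_1 + \alpha C_2$, keeping a row-space term $\epsilon C_1$ so that $A = \Phi C_1$ and $B = \Phi_{\test} C_1$ are generally nonzero, whereas you take a pure null-space, rank-one perturbation $w = w^* + v u^\top$ with $v \in \ker(M) \setminus \{0\}$, which forces $A = B = 0$ and reduces the bookkeeping to $\|v u^\top\|_F = \|v\|_2 \|u\|_2$ plus a triangle inequality. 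Both arguments hinge on the same key fact you flag explicitly, namely $\ker(M) = \ker(\Phi) \cap \ker(\Phi_{\test})$, so a single perturbation leaves train and test predictions untouched simultaneously. What the paper's extra generality buys is the realization of \emph{nonzero} but bounded error terms (matching its narrative that errors can be made ``arbitrarily close to zero,'' not only exactly zero, while norms blow up); what your version buys is a cleaner, self-contained proof making transparent that statement \ref{thm:linear-counter-ii} needs nothing beyond rank deficiency of $M$, and since $A = B = 0$ satisfies $\|A\|_F \le 1$, $\|B\|_F \le 1$, it fully establishes the corollary as stated.
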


Whereas Theorem \ref{thm:linear-couter} and  Corollary \ref{coro:linear-counter} concern test errors instead of  expected risk (in order to be consistent with  empirical studies), Remark \ref{prop:limit-ge}  shows the existence of   the same phenomena  for expected risk for general machine learning models  not limited  to deep learning  and linear hypothesis spaces; i.e., Remark \ref{prop:limit-ge}  shows that  none of small capacity, low complexity,  stability, robustness, and flat minima is  \emph{necessary}  for generalization in  machine learning for each given problem instance $(\PP_{(X,Y)},S)$, although one of them can be \textit{sufficient} for generalization.
This statement does not contradict  necessary conditions and no free lunch theorem from previous learning theory, as explained in the subsequent subsections.

\begin{remark}  \label{prop:limit-ge}
Given a pair $(\PP_{(X,Y)}, S)$ and a  desired $\epsilon > \inf_{f \in \Ycal^\Xcal} {\mathcal R}[f] - {\mathcal R}_{S}[f]$, let $f_\epsilon^*$ be a function such that $\epsilon \ge {\mathcal R}[f^*_{\epsilon}] - {\mathcal R}_{S}[f^*_{\epsilon}]$.
Then, 
\begin{enumerate}[label=(\roman*)]
\item \label{prop:ge-limit-1}
For any hypothesis space ${\mathcal F}$ whose hypothesis-space complexity is  large enough to memorize any dataset and which includes $f^*_\epsilon$ possibly at an arbitrarily sharp minimum, there exist learning algorithms $\mathcal A$ such that the generalization gap  of $f_{\mathcal A(S)}$ is at most $\epsilon$, and 
\item \label{prop:ge-limit-2} 
There exist    
arbitrarily unstable and arbitrarily non-robust algorithms $\mathcal A$ such that the generalization gap of $f_{\mathcal A(S)}$ is at most  $\epsilon$.
\end{enumerate}
To see this, first consider statement \ref{prop:ge-limit-1}. Given such a ${\mathcal F}$, consider any $\mathcal A$ such that $\mathcal A$ takes ${\mathcal F}$ and $S$ as input and outputs  $f^*_\epsilon$. Clearly, there are many such algorithms  $\mathcal A$. For example, given a $S$, fix $\mathcal A$ such that $\mathcal A$ takes ${\mathcal F}$ and $S$ as input and outputs $f^*_\epsilon$ (which already proves the statement), or even $f^*_\epsilon + \delta$ where $\delta$  becomes zero by the right choice of  hyper-parameters and of  small variations of ${\mathcal F}$ (e.g., architecture search in deep learning) such that ${\mathcal F}$ still satisfy the condition in the statement. This establishes  statement \ref{prop:ge-limit-1}. 

Consider statement \ref{prop:ge-limit-2}. Given any dataset $S'$, consider a look-up algorithm $\mathcal A'$ that always outputs $f^*_\epsilon$ if $S=S'$, and outputs $f_1$ otherwise such that $f_1$ is arbitrarily non-robust and $|{\mathcal L}(f^{*}_\epsilon(x),y)-{\mathcal L}(f_1(x),y)|$ is arbitrarily large (i.e., arbitrarily non-stable). This proves statement \ref{prop:ge-limit-2}. Note that while $A'$ here suffices to prove statement \ref{prop:ge-limit-2}, we can also generate other non-stable and non-robust algorithms  by noticing the essence captured in Remark \ref{rem:essence}.
\end{remark}

 We capture  the essence of all the above observations in the following remark. 

\begin{remark} \label{rem:essence}
The expected risk ${\mathcal R}[f]$ and the generalization gap  ${\mathcal R}[f] - {\mathcal R}_{S}[f]$  of a hypothesis $f$  with a true distribution $\PP_{(X,Y)}$ and a dataset $S$    are completely determined by the tuple $(\PP_{(X,Y)},S,f)$, independently of other factors, such as a hypothesis space ${\mathcal F}$ (and hence its properties such as capacity, Rademacher complexity, pre-defined bound on  norms, and flat-minima) and    properties of random  datasets different from the given $S$ (e.g., stability and robustness of the learning algorithm $\mathcal A$).  In contrast, the conventional wisdom states that   these other factors  are what matter. This has created the ``apparent paradox'' in the literature.   
\end{remark}
From these observations, we propose  the following  open problem:

\vspace{6pt}
\noindent\textbf{Open Problem 2.} 
Tightly characterize the expected risk ${\mathcal R}[f]$ or the generalization gap  ${\mathcal R}[f] - {\mathcal R}_{S}[f]$ of a hypothesis $f$ with a pair $(\PP_{(X,Y)},S)$ of a true distribution and a dataset, \textit{producing theoretical insights,} based only on  properties of  the hypothesis $f$ and the pair $(\PP_{(X,Y)},S)$.
\vspace{6pt}

Solving Open Problem 2 for deep learning implies solving Open Problem 1, but not vice versa. Open Problem 2  encapsulates  the essence of Open Problem 1 and all the issues from our Theorem \ref{thm:linear-couter}, Corollary \ref{coro:linear-counter} and Proposition \ref{prop:limit-ge}. 

\subsection{Consistency of theory } \label{app:consistency}
 The empirical observations  in \citep{zhang2016understanding} and our results above  may seem to contradict the  results of statistical learning theory. However, there is no contradiction, and the apparent inconsistency arises from the misunderstanding and misuse of the precise  meanings of  the theoretical statements. 

Statistical learning theory can be considered to provide two types of statements relevant to the scope of this paper. The first type (which comes from upper bounds) is logically in the form of  ``$p$  implies $q$,'' where   $p:=$ ``the hypothesis-space complexity is small'' (or another statement about stability, robustness,  or flat minima), and $q:=$ ``the generalization gap is  small.'' Notice that ``$p$ implies $q$'' does not  imply ``$q$ implies $p$.'' Thus, based on statements of this type, it is entirely possible that the generalization gap is small even when the hypothesis-space complexity is large or the learning mechanism is unstable, non-robust, or subject to sharp minima. 

The second type (which comes from lower bounds) is logically in the following form: in a set $U_{\all}$   of all possible problem configurations, there exists a   subset $U\subseteq U_\all$ such that  ``$q$ implies $p$'' in $U$ (with the same definitions of $p$ and $q$ as in the previous paragraph). For example, \citet[Section 3.4]{mohri2012foundations} derived  lower bounds on the generalization gap by showing the existence of a ``bad'' distribution that characterizes $U$. Similarly, the classical \textit{no free lunch} theorems are the results with the existence of a worst-case distribution for each algorithm. However, if the problem instance at hand (e.g., object classification with MNIST or CIFAR-10) is not in such a $U$  in the proofs (e.g., if the data distribution is not among the ``bad'' ones considered in the proofs),  $q$ does not necessarily imply $p$. Thus, it is still naturally possible  that the generalization gap is small with large  hypothesis-space complexity,   instability, non-robustness,  and sharp minima. Therefore, there is no contradiction or paradox.

\subsection{Difference in assumptions and problem settings} \label{sec:reslove_ge_puzzle}

Under certain assumptions, many results in statistical learning theory have been shown to be tight and insightful (e.g., \citealt{mukherjee2006learning,mohri2012foundations}).
Hence, the need of rethinking generalization partly comes from a difference in the assumptions and problem settings.

 Figure \ref{fig:scopes_slt-alt} illustrates the differences in  assumptions  in statistical learning theory and some empirical studies.  On one hand, in statistical learning theory, a distribution $\PP_{(X,Y)}$ and a dataset $S$ are usually unspecified   except    that $\PP_{(X,Y)}$ is in some set $\mathcal P$ and a dataset $S \in D$ is drawn randomly according to  $\PP_{(X,Y)}$ (typically with the i.i.d. assumption). On the other hand, in most empirical studies  and in our theoretical results (Theorem \ref{thm:linear-couter} and Proposition \ref{prop:limit-ge}), a distribution $\PP_{(X,Y)}$ is still unknown yet specified (e.g., via a real world  process) and a dataset $S$ is  specified  and usually known  (e.g.,  CIFAR-10 or ImageNet). Intuitively, whereas statistical learning theory considers a  set $\mathcal P \times D $ because of  weak assumptions, some empirical studies can focus on a specified point $(\PP_{(X,Y)},S)$ in a set $\mathcal P \times D$ because of stronger  assumptions. Therefore, by using the same terminology  such as  ``expected risk'' and ``generalization'' in both cases, we are susceptible to confusion and apparent contradiction.

\begin{figure}[tb!]
\centering
\includegraphics[width=0.6\columnwidth]{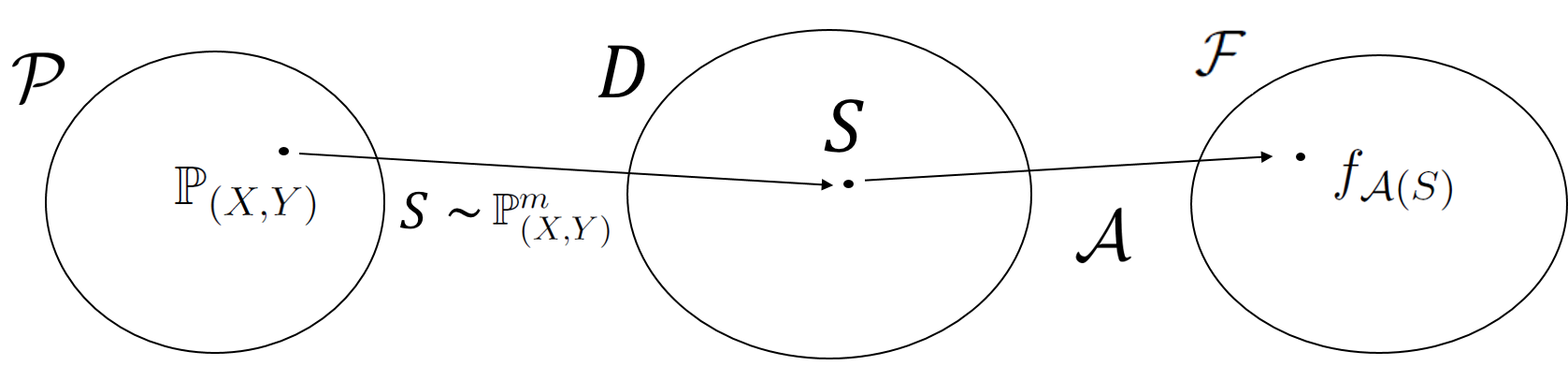} 
\caption{An illustration of differences in assumptions.  Statistical learning theory analyzes the generalization behaviors of $f_{\mathcal A(S)}$ over randomly-drawn \textit{unspecified} datasets $S \in D$ according to some  \textit{unspecified}  distribution $\PP_{(X,Y)}\in \mathcal P$. Intuitively,  statistical learning theory concerns more about questions regarding a set $\mathcal P \times D$ because of   the \textit{unspecified} nature of $(\PP_{(X,Y)},S)$, whereas certain empirical studies (e.g., \citealt{zhang2016understanding}) can focus  on questions regarding each  \textit{specified}   point $(\PP_{(X,Y)},S)\in \mathcal P \times D$.}
\label{fig:scopes_slt-alt} 
\end{figure}

Lower bounds, necessary conditions and tightness in statistical learning theory are typically defined via a worst-case distribution $\PP_{(X,Y)}^{\text{worst}}\in \mathcal P$. For instance, classical ``no free lunch'' theorems and certain lower bounds on the generalization gap (e.g., \citealt[Section 3.4]{mohri2012foundations}) have been proven \textit{for} the worst-case distribution $\PP_{(X,Y)}^{\text{worst}} \in \mathcal P$. Therefore, ``tight'' and ``necessary''  typically mean ``tight'' and ``necessary'' for the set $\mathcal P \times D $ (e.g., through the worst or average case), but \textit{not} for each particular point $(\PP_{(X,Y)},S) \in \mathcal P \times D $. From this viewpoint, we can understand that even if the quality of the set $\mathcal P \times D $ is ``bad''  overall, there may exist a ``good'' point $(\PP_{(X,Y)},S) \in \mathcal P \times D $.

Several approaches in statistical learning theory, such as the data dependent and Bayesian  approaches  \citep{herbrich2002algorithmic,dziugaite2017computing},  use more assumptions  on the set $\mathcal P \times D$ to take advantage of more prior and posterior information; these have an ability to tackle  Open Problem 1. However, these approaches do not apply to Open Problem 2 as  they still depend on other factors than the given $(\PP_{(X,Y)},S,f)$. For example, data-dependent bounds with the \textit{luckiness framework} \citep{shawe1998structural,herbrich2002algorithmic} and \textit{empirical} Rademacher complexity  \citep{koltchinskii2000rademacher,bartlett2002model} still depend on a  concept of hypothesis spaces (or the sequence of hypothesis spaces), and the robustness approach \citep{xu2012robustness} depend on different datasets than a given $S$ via the definition of robustness (i.e.,  in Section \ref{sec:preliminaries},  $\zeta(S)$ is a data-dependent term, but the definition of $\zeta$ itself and $\Omega$ depend on other datasets than $S$). 

We note that analyzing a set $\mathcal P \times D$  is of  significant interest  for its own merits and is a natural task in the field of computational complexity (e.g., categorizing a set of problem instances into subsets  with or without  polynomial solvability).  Indeed, the situation  where theory focuses more on a set and many practical studies focus on each element in the set  is prevalent in computer science (see the discussion in Appendix \ref{app:relation_other-fields} for more detail).

\subsection{Practical role of generalization theory} \label{app:role_ge}

From the discussions above, we can see  that there  is a \textit{logically expected} difference between the scope in theory and the focus in practice; it is logically expected that there are problem instances where theoretical bounds are pessimistic. In order for generalization theory to have maximal impact in practice, we must be clear on a set of different roles it can play regarding practice, and then work to extend and strengthen it in each of these roles.  We have identified the following practical roles for theory: 

\begin{description}
\item[Role 1] \label{rol_expected_risk} 
Provide guarantees on expected risk.
\item[Role 2] \label{rol_theory}
Guarantee generalization gap
\begin{description}
\item[Role 2.1] \label{rol_expected_risk_1} 
to be small for a given fixed  $S$, and/or
\item[Role 2.2] \label{rol_expected_risk_2}
to approach zero with \textit{a fixed model class} as $m$ increases.
\end{description}
\item[Role 3] \label{rol_guide_search}
Provide theoretical insights to guide the search over model classes.
\end{description}

\section{Generalization bounds via validation} \label{sec:generalization-training-validation}
In practical deep learning, we typically adopt the training--validation para\-digm, usually with a held-out validation set. We then  search over hypothesis spaces by changing architectures (and other hyper-parameters) to obtain low validation error. In this view, we can conjecture the reason that deep learning can sometimes generalize well as follows: it is partially because we can obtain a  good model via  search using a validation dataset.
Indeed, Proposition \ref{prop:ge_validation} states that if the validation error of a hypothesis is small,  it is guaranteed to generalize well,  regardless of its capacity, Rademacher complexity, stability, robustness, and flat minima. Let  $S_{m_\val}^{(\val)}$ be a held-out validation dataset of size $m_\val$, which is independent of the training dataset $S$.

\begin{proposition} \label{prop:ge_validation}
\emph{(generalization guarantee via validation error)}
Assume that  $S_{m_\val}^{(\val)}$ is  generated by i.i.d. draws according to a true distribution $\PP_{(X,Y)}$. Let $\kappa_{f,i} = {\mathcal R}[f] - {\mathcal L}(f(x_i),y_i)$ for $(x_i,y_i) \in S_{m_\val}^{(\val)}$.  Suppose that $\EE[\kappa_{f,i}^2] \le \gamma^{2}$ and $|\kappa_{f,i}| \le C$ almost surely, for all $(f,i) \in {\mathcal F}_\val \times \{1,\dots,m_\val\}$. Then, for any $\delta>0$, with probability at least $1-\delta$, the following holds for all $f \in {\mathcal F}_\val$: 
$$
{\mathcal R}[f] \le {\mathcal R}_{S_{m_\val}^{(\val)}}[f] + \frac{2C \ln (\frac{|{\mathcal F}_\val|}{\delta})}{3m_\val}+\sqrt{ \frac{2\gamma^2 \ln(\frac{|{\mathcal F}_\val|}{\delta})}{m_\val}}.
$$
\end{proposition}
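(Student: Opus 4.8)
The plan is to recognize that, for each \emph{fixed} hypothesis $f$, the generalization gap is an empirical average of the i.i.d.\ mean-zero random variables $\kappa_{f,i}$, and then to combine a Bernstein-type concentration inequality with a union bound over the finite set ${\mathcal F}_\val$. First I would fix $f \in {\mathcal F}_\val$ and write
\[
{\mathcal R}[f] - {\mathcal R}_{S_{m_\val}^{(\val)}}[f] = \frac{1}{m_\val}\sum_{i=1}^{m_\val}\big({\mathcal R}[f] - {\mathcal L}(f(x_i),y_i)\big) = \frac{1}{m_\val}\sum_{i=1}^{m_\val}\kappa_{f,i}.
\]
Because $S_{m_\val}^{(\val)}$ is drawn i.i.d.\ from $\PP_{(X,Y)}$ and $f$ is fixed independently of it, the variables $\{\kappa_{f,i}\}_i$ are i.i.d.\ with $\EE[\kappa_{f,i}] = {\mathcal R}[f] - \EE[{\mathcal L}(f(x_i),y_i)] = 0$ (by the definition of ${\mathcal R}[f]$ and the i.i.d.\ draws), $\EE[\kappa_{f,i}^2] \le \gamma^2$, and $|\kappa_{f,i}| \le C$ almost surely. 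This is precisely the setting of Bernstein's inequality.

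Next I would apply the one-sided Bernstein inequality to the sum $\sum_i \kappa_{f,i}$, which gives, for every $s>0$,
\[
\PP\!\left(\sum_{i=1}^{m_\val}\kappa_{f,i} \ge s\right) \le \exp\!\left(-\frac{s^2}{2\big(m_\val \gamma^2 + Cs/3\big)}\right).
\]
Setting the right-hand side equal to a per-hypothesis failure probability $\delta'$ and solving the resulting quadratic in $s$ — then bounding the positive root via $\sqrt{a+b}\le\sqrt{a}+\sqrt{b}$ — yields that, with probability at least $1-\delta'$,
\[
\frac{1}{m_\val}\sum_{i=1}^{m_\val}\kappa_{f,i} \le \frac{2C\ln(1/\delta')}{3m_\val} + \sqrt{\frac{2\gamma^2\ln(1/\delta')}{m_\val}},
\]
which is exactly the claimed bound on ${\mathcal R}[f]-{\mathcal R}_{S_{m_\val}^{(\val)}}[f]$ for a single $f$.

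Finally I would take a union bound over all $f\in{\mathcal F}_\val$: setting $\delta' = \delta/|{\mathcal F}_\val|$ so that $\ln(1/\delta') = \ln(|{\mathcal F}_\val|/\delta)$, the per-hypothesis failure events have total probability at most $\delta$, and on the complementary event the displayed inequality holds \emph{simultaneously} for all $f\in{\mathcal F}_\val$. This uniformity is what lets the guarantee be evaluated at the particular $f$ obtained by searching for low validation error.

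I expect the only genuinely delicate points to be (i) the algebraic inversion of the Bernstein tail bound, where the clean two-term form requires the specific simplification $\sqrt{a+b}\le\sqrt{a}+\sqrt{b}$ applied to the positive root of the quadratic; and (ii) the conceptual requirement that ${\mathcal F}_\val$ (and hence each $f$) be fixed independently of $S_{m_\val}^{(\val)}$, since this is what guarantees $\EE[\kappa_{f,i}]=0$ and makes the $\kappa_{f,i}$ genuinely i.i.d. It is the union bound, rather than independence of any \emph{selected} $f$, that legitimizes applying the result to a validation-chosen hypothesis.
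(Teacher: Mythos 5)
Your proposal is correct and follows essentially the same route as the paper's own proof: Bernstein's inequality applied to the i.i.d.\ zero-mean variables $\kappa_{f,i}$ for a fixed $f$, inversion of the tail bound via the quadratic formula and $\sqrt{a+b}\le\sqrt{a}+\sqrt{b}$, and a union bound over the finite set ${\mathcal F}_\val$ (the paper takes the union bound before inverting rather than after, which is immaterial). Your closing remarks on the independence of ${\mathcal F}_\val$ from the validation set and on the role of the union bound match the paper's reasoning exactly.
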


Here,  ${\mathcal F}_\val$ is defined as a  set of models $f$ that is independent of a held-out validation dataset $S_{m_\val}^{(\val)}$, but can depend on the training dataset $S$.   For example, ${\mathcal F}_\val$ can contain a  set of models $f$ such that each element $f$ is a result  at the end of each epoch during training with at  least $99.5$\% \textit{training} accuracy. In this example, $|{\mathcal F}_\val|$ is at most (the number of epochs) $\times$ (the cardinality of the set of possible hyper-parameter settings), and is likely much smaller than that because of the 99.5\% training accuracy criteria and the fact that a  space of many hyper-parameters is narrowed down by using the training dataset as well as other datasets from different tasks. If a  hyper-parameter search depends on the validation dataset, ${\mathcal F}_\val$ must be the possible space of the search instead of the space actually visited by the search. We can also use a sequence  $\{{\mathcal F}_{\val}^{(j)}\}_j$ (see Appendix \ref{app:additional_discussions}).

The bound in Proposition \ref{prop:ge_validation} is non-vacuous and tight enough to be practically meaningful. For example, consider a classification task with 0--1 loss. Set $m_\val= 10,000$ (e.g., MNIST and CIFAR-10) and  $\delta=0.1$. Then, even in the  worst case with $C= 1$ and $\gamma^{2} = 1$ and even with $|{\mathcal F}_\val| = 1,00 0,000,000$, we have with probability at least $0.9$ that ${\mathcal R}[f] \le {\mathcal R}_{S_{m_\val}^{(\val)}}[f] + 6.94\%$ for all $f \in {\mathcal F}_\val$. In a non-worst-case scenario, for example, with  $C= 1$ and $\gamma^{2} = (0.05)^{2}$, we can replace $6.94\%$ by $0.49\%$. With a larger validation set (e.g.,  ImageNet) and/or more optimistic $C$ and $\gamma^{2} $, we can  obtain much better bounds.

Although Proposition \ref{prop:ge_validation}  poses the  concern of increasing the generalization bound when using a single validation dataset with too large $|{\mathcal F}_\val|$, the rate of increase is only $\ln |{\mathcal F}_\val|$ and $\sqrt{\ln |{\mathcal F}_\val|}$.  We can also avoid   dependence on the cardinality of ${\mathcal F}_{\val}$  using Remark \ref{rem:ge_validation_2}.

\begin{remark} \label{rem:ge_validation_2} 
Assume that  $S_{m_\val}^{(\val)}$ is  generated by i.i.d. draws according to  $\PP_{(X,Y)}$. Let $\mathcal L_{{\mathcal F}_{\val}}=\{g:f\in {\mathcal F}_{\val}, g(x,y):={\mathcal L}(f(x),y)\}$. By applying  \citep[Theorem 3.1]{mohri2012foundations} to $\mathcal L_{{\mathcal F}_{\val}}$, if the codomain  of ${\mathcal L}$ is in $[0,1]$, with probability at least $1-\delta$, for all $f \in \mathcal F_{\val}$,
$
{\mathcal R}[f] \le {\mathcal R}_{S_{m_\val}^{(\val)}}[f] +   2 {\mathfrak{R}}_m (\mathcal L_{{\mathcal F}_{\val}}) + \sqrt{(\ln 1/\delta)/m_\val}.
$
\end{remark}

Unlike the standard use of  Rademacher complexity with a training dataset, the set ${\mathcal F}_{\val}$ cannot depend on the validation set $S_{m_\val}$, but can depend  on the training dataset $S$ in any manner, and hence ${\mathcal F}_{\val}$  differs significantly from the typical hypothesis space defined by the parameterization of models. We can thus end up with a very different effective capacity and hypothesis complexity (as selected by model search using the validation set) depending on whether the training data are random or have interesting structure which the neural network can capture.

\section{Direct analyses of neural networks} \label{sec:theoretical_concern}
Unlike the previous section, this section analyzes the generalization gap with a training dataset $S$. In Section \ref{sec:rethinking_ML}, we  extended Open Problem 1 to Open Problem 2, and identified the different assumptions in theoretical and empirical studies. Accordingly, this   section aims to address these problems to some extent, both in the case of particular specified datasets and the case of random  unspecified datasets.
To  achieve this goal, this section presents a \textit{direct analysis} of neural networks, rather than deriving results about neural networks from more generic theories based on capacity, Rademacher complexity, stability, or robustness.

Sections \ref{sec:data_dependent_bounds} and \ref{sec:sub_data-independent} deals with the squared loss, while Section \ref{sec:classification_direct} considers   0-1 loss with multi-labels.

\subsection{Model description via deep paths}  \label{sec:direct_model_description}
We consider general  neural networks of any depth that have the structure of a directed acyclic graph (DAG) with ReLU nonlinearity and/or max pooling. This   includes  any structure of a feedforward network  with convolutional and/or fully connected layers, potentially with skip connections. For pedagogical purposes, we first discuss our model description for layered networks without skip connections, and then describe  it for DAGs.

\paragraph{Layered nets without skip connections} Let $z^{[l]}(x,w)\in \RR^{n_l}$ be the pre-activation vector of the $l$-th hidden layer, where $n_l$ is the width of the $l$-th hidden layer, and  $w$ represents the trainable parameters.  Let $L-1$ be the number of hidden layers.  For layered networks without skip connections,  the pre-activation (or pre-nonlinearity) vector of the $l$-th layer can be written as  
$$
z^{[l]}(x,w) = W^{[l]} \sigma^{(l-1)} \left(z^{[l-1]}(x,w)\right),  
$$
with a boundary definition $\sigma^{(0)} \left(z^{[0]}(x,w)\right)\equiv x$, where $\sigma^{(l-1)}$ represents  nonlinearity via ReLU and/or max pooling at the $(l-1)$-th hidden layer, and $W^{[l]}\in \RR^{n_l \times n_{l-1}}$ is a matrix of weight parameters connecting the $(l-1)$-th layer to the $l$-th layer. Here, $W^{[l]}$ can have \textit{any} structure (e.g., shared    and sparse weights to represent a convolutional layer).  Let  $\dot \sigma^{[l]}(x,w)$ be  a vector with each element being $0$ or $1$ such that $\sigma^{[l]} \left(z^{[l]}(x,w)\right)=\dot \sigma^{[l]}(x,w)\circ z^{[l]}(x,w)$, which is an    element-wise product of  the vectors $\dot \sigma^{[l]}(x,w)$ and $ z^{[l]}(x,w)$. Then, we can write the pre-activation of the $k$-th output unit at the last layer $l=L$ as
\begin{align*}
z^{[L]}_k(x,w)=\sum_{j_{L-1}=1}^{n_{L-1}}W^{[L]}_{kj_{L-1}} \dot \sigma_{j_{L-1}}^{(L-1)}(x,w) z_{j_{L-1}}^{[L-1]}(x,w).
\end{align*}
By expanding $ z^{[l]}(x,w)$ repeatedly and exchanging the sum and product via the distributive law of multiplication,
\begin{align*}
z^{[L]}_k(x,w) 
 = 
\sum\limits_{j_{L-1}=1}^{n_{L-1}} \sum\limits_{j_{L-2}=1}^{n_{L-2}} \dots \sum\limits_{j_{0}=1}^{n_{0}}\overline W_{kj_{L-1}j_{L-2}\dots j_{0}} \dot \sigma_{j_{L-1}j_{L-2}\dots j_{1}}(x,w)x_{j_{0}},
\end{align*}
\normalsize
where $\overline W^{}_{kj_{L-1}j_{L-2}\dots j_{0}}=W^{[L]}_{kj_{L-1}} \prod_{l=1}^{L-1} W^{[l]}_{j_{l}j_{l-1}}$ and $\dot \sigma_{j_{L-1}j_{L-2}\dots j_{1}}(x,w)= \allowbreak \prod_{l=1}^{L-1}  \allowbreak \dot \sigma_{j_{l}}^{[l]}(x,w)$. By merging the indices $j_{0},\dots,j_{L-1}$ into $j $ with some bijection between $\{1,\dots,n_{0}\} \times \cdots \times \{1,\dots, \allowbreak n_{L-1}\} \ni (j_{0},\dots,j_{L-1})$ and $\{1,\dots,n_{0}n_{1}\cdots \allowbreak n_{L-1}\}\ni j$,  
$$
z^{[L]}_{k}(x,w) = \scalebox{1.0}{$\sum_{j}$}  \bar w_{k,j} \bar \sigma_{j} (x,w) \bar x_{j}, 
$$
where $\bar w_{k,j},\bar \sigma_{j} (x,w)$ and $\bar x_j$  represent $\overline W^{}_{kj_{L-1}j_{L-2}\dots j_{0}},\dot \sigma_{j_{L-1}j_{L-2}\dots j_{1}}(\allowbreak x,w)$ and $x_{j_{0}}$ respectively with the change of indices (i.e., $\sigma_{j} (x,w)$ and  $\bar x_{j}$ respectively contain the $n_0$ numbers and $n_{1}\cdots n_{L-1}$ numbers of the same copy of each $\dot \sigma_{j_{L-1}j_{L-2}\dots j_{1}}(x,w)$ and $x_{j_0}$). Note that $\sum_{j}$ represents  summation over all the paths from the input $x$ to the $k$-th output unit.

\paragraph{DAGs} Remember that every DAG has at least one topological ordering, which can be used to to create a layered structure    with possible skip  connections (e.g., see \citealt{healy2001layer,neyshabur2015norm}). 
In other words, we consider DAGs such that the pre-activation  vector of the $l$-th layer can be written as    
$$
z^{[l]}(x,w) = \sum_{l'=0}^{l-1}W^{(l,l')} \sigma^{[l']} \left(z^{[l']}(x,w)\right)
$$ 
with a boundary definition $\sigma^{(0)} \left(z^{[0]}(x,w)\right)\equiv x$, where $W^{(l,l')}\in \RR^{n_l \times n_{l'}}$ is a matrix of weight parameters connecting the $l'$-th layer to the $l$-th layer. Again, $W^{(l,l')}$ can have \textit{any} structure. Thus, in the same way as with layered networks without skip connections, for all $ k \in \{1,\dots,d_{y}\}$,  
$$
z^{[L]}_{k}(x,w) = \scalebox{1.0}{$\sum_{j}$}  \bar w_{k,j} \bar \sigma_{j} (x,w) \bar x_j,
$$ 
where  $\sum_{j}$ represents the summation over all paths from the input $x$ to the $k$-th output unit; i.e., $\bar w_{k,j} \bar \sigma_{j} (x,w) \bar x_j$ is the contribution from the $j$-th path to the $k$-th output unit. 
Each of $\bar w_{k,j}, \bar \sigma_{j} (x,w)$  and $\bar x_j$ is defined in the same manner as in the case of layered networks without skip connections. In other words, the $j$-th path weight   $\bar w_{k,j}$ is the product of  the weight parameters  in the $j$-th path, and 
  $\bar \sigma _{j}(x,w)$ is the product of the $0$-$1$ activations in the $j$-th path,  corresponding to ReLU nonlinearity and max pooling;  $\bar \sigma_j (x,w)=1$ if all units in the $j$-th path  are active, and $\bar \sigma_j (x,w)=0$ otherwise. Also, $\bar x_j$ is the input used in the $j$-th path. Therefore, for DAGs, including layered networks without skip connections, 
\begin{align} \label{eq:deep_to_linear}
z^{[L]}_{k}(x,w) = [\bar x \circ \bar \sigma (x,w)]^\top \bar w_k,
\end{align}
where $[\bar x \circ \bar \sigma (x,w)]_j=\bar x_j \bar \sigma_{j} (x,w)$ and $(\bar w_k)_j=\bar w_{k,j}$ are the vectors of the size of the number of the paths.

\subsection{Theoretical insights via tight theory for every  pair $(\PP,S)$} \label{sec:data_dependent_bounds}
Theorem \ref{thm:deterministic_bound} solves Open Problem 2 (and hence Open Problem 1) for neural networks with  squared loss by stating that the generalization gap of a $w$ with respect to a problem  $(\PP_{(X,Y)},S)$ is tightly analyzable  with theoretical insights,  based  only on the quality of the  $w$ and the pair $(\PP_{(X,Y)},S)$. We do \textit{not} assume that   $S$ is  generated randomly based on  some relationship with  $\PP_{(X,Y)}$; the theorem holds for any dataset, regardless of how it was generated.
Let $w^{\scalebox{.6}{$S$}}$ and $\bar w_k^{\scalebox{.6}{$S$}}$ be the parameter vectors $w$ and $\bar w_k$ learned with a dataset $S$. Let ${\mathcal R}[w^{\scalebox{.6}{$S$}}]$ and ${\mathcal R}_{S}[w^{\scalebox{.6}{$S$}}]$ be the expect risk and empirical risk of the model with the learned parameter $w^{\scalebox{.6}{$S$}}$. Let $z_i=[\bar x_i \circ \bar \sigma (x_i,w^{\scalebox{.6}{$S$}})]$. Let $G=\EE_{x,y\sim \PP_{(X,Y)}}[zz\T]-\frac{1}{m}\sum_{i=1}^m z_iz_i^\top$ and $v=\frac{1}{m}\sum_{i=1}^m y_{ik}z_i - \EE_{x,y \sim \PP_{(X,Y)}} [y_{k}z]$.  Given a matrix $M$, let $\lambda_{\max}(M)$ be the largest eigenvalue of $M$.

\begin{theorem} \label{thm:deterministic_bound}
Let $\{\lambda_j\}_j$ and $\{u_j\}_j$ be a set of eigenvalues and a corresponding orthonormal set of eigenvectors  of $G$. Let $\theta_{\bar w_{k},j}^{(1)}$ be the angle between $u_j$ and $\bar w_k$.  Let $\theta^{(2)}_{\bar w_{k}}$ be the angle between $v$ and $\bar w_k$. Then (deterministically),
\begin{align*}
{\mathcal R}[w^{\scalebox{.6}{$S$}}]- {\mathcal R}_{S}[w^{\scalebox{.6}{$S$}}] - c_{y} 
 &= \scalebox{.93}{$
 \sum\limits_{k=1}^{d_y} \left(2\|v\|_2\|\bar w_k^{\scalebox{.5}{$S$}}\|_2 \cos \theta^{(2)}_{\bar w_k^{\scalebox{.45}{$S$}}}+\|\bar w_k^{\scalebox{.5}{$S$}}\|_2^{2}\sum\limits_j \lambda_j  \cos^2 \theta_{\bar w_k^{\scalebox{.45}{$S$}},j}^{(1)} \right)
$}
\\ & \le\sum_{k=1}^{d_y} \left(2\|v\|_2\|\bar w_k^{\scalebox{.5}{$S$}}\|_2+\lambda_{\max}(G) \|\bar w_k^{\scalebox{.5}{$S$}}\|^2_2 \right), 
\end{align*} 
\normalsize
where $c_{y} = \EE_{y}[\|y\|_2^2]-\frac{1}{m}\sum_{i=1}^m \|y_{i}\|_2^2$.
\end{theorem}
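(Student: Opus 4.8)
The plan is to exploit the key structural identity \eqref{eq:deep_to_linear}, which expresses each output unit as $z^{[L]}_{k}(x,w) = z^\top \bar w_k$ where $z = \bar x \circ \bar \sigma(x,w)$. Since the network output is \emph{linear in the path-weight vector $\bar w_k$} once the activation pattern is fixed, the squared-loss risk becomes a quadratic form in $\bar w_k$, and the entire proof reduces to expanding the quadratic and organizing terms via the spectral decomposition of $G$. First I would write both risks explicitly using the squared loss ${\mathcal L}(f(x),y)=\sum_{k=1}^{d_y}(z^\top \bar w_k - y_k)^2$, so that ${\mathcal R}[w^{\scalebox{.6}{$S$}}] = \EE[\sum_k (z^\top \bar w_k - y_k)^2]$ and ${\mathcal R}_S[w^{\scalebox{.6}{$S$}}] = \frac{1}{m}\sum_i \sum_k (z_i^\top \bar w_k - y_{ik})^2$.

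\textbf{Expanding the difference.}
Next I would expand each squared term as $(z^\top \bar w_k)^2 - 2 y_k (z^\top \bar w_k) + y_k^2$ and subtract the empirical version from the expected version, grouping into three pieces per output unit $k$. The purely quadratic piece $(z^\top \bar w_k)^2 = \bar w_k^\top (z z^\top) \bar w_k$ contributes $\bar w_k^\top\big(\EE[z z^\top] - \frac{1}{m}\sum_i z_i z_i^\top\big)\bar w_k = \bar w_k^\top G\, \bar w_k$ by the definition of $G$. The cross term $-2 y_k z^\top \bar w_k$ contributes $-2\big(\EE[y_k z] - \frac{1}{m}\sum_i y_{ik} z_i\big)^\top \bar w_k = 2 v^\top \bar w_k$ by the definition of $v$ (noting the sign convention makes $v$ the empirical-minus-expected quantity). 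The label-only piece $y_k^2$ collects across $k$ into $c_y = \EE[\|y\|_2^2] - \frac{1}{m}\sum_i \|y_i\|_2^2$, which is moved to the left-hand side. This yields exactly ${\mathcal R}[w^{\scalebox{.6}{$S$}}]- {\mathcal R}_{S}[w^{\scalebox{.6}{$S$}}] - c_y = \sum_k (2 v^\top \bar w_k + \bar w_k^\top G\, \bar w_k)$.

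\textbf{Spectral rewriting and the bound.}
For the exact (middle) expression I would rewrite each term geometrically: $v^\top \bar w_k = \|v\|_2 \|\bar w_k\|_2 \cos\theta^{(2)}_{\bar w_k}$ by definition of the angle, and using the orthonormal eigendecomposition $G = \sum_j \lambda_j u_j u_j^\top$ gives $\bar w_k^\top G\, \bar w_k = \sum_j \lambda_j (u_j^\top \bar w_k)^2 = \|\bar w_k\|_2^2 \sum_j \lambda_j \cos^2\theta^{(1)}_{\bar w_k,j}$, since $u_j^\top \bar w_k = \|\bar w_k\|_2 \cos\theta^{(1)}_{\bar w_k,j}$ (the $u_j$ are unit vectors). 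For the final inequality I would bound $\cos\theta^{(2)}_{\bar w_k} \le 1$ in the cross term and use $\bar w_k^\top G\, \bar w_k \le \lambda_{\max}(G)\|\bar w_k\|_2^2$ (equivalently $\sum_j \lambda_j \cos^2\theta^{(1)}_{\bar w_k,j} \le \lambda_{\max}(G)$, since the squared cosines are nonnegative and sum to one). I do not anticipate a genuine obstacle here: the result is an exact algebraic identity plus a one-line Rayleigh-quotient bound, so the only real subtlety is bookkeeping the sign conventions in the definitions of $G$ and $v$ (both are ``empirical minus expected'' or ``expected minus empirical'' in a way that must be tracked consistently so that the cross term comes out as $+2v^\top\bar w_k$) and confirming that the reduction \eqref{eq:deep_to_linear} legitimately holds with $\bar\sigma$ frozen at the learned $w^{\scalebox{.6}{$S$}}$, which it does because we evaluate everything at the fixed learned parameter.
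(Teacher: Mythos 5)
Your proposal is correct and follows essentially the same route as the paper's proof: the identical three-term decomposition of the squared-loss gap via Equation \eqref{eq:deep_to_linear}, the same spectral rewriting $\bar w_k^\top G \bar w_k = \|\bar w_k\|_2^2 \sum_j \lambda_j \cos^2\theta^{(1)}_{\bar w_k,j}$, and the same Rayleigh-quotient and $\cos\theta \le 1$ bounds for the final inequality. Your attention to the sign convention in $v$ (empirical minus expected, so the cross term enters as $+2v^\top \bar w_k$) matches the paper's bookkeeping exactly.
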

\textit{Proof idea.} From Equation \eqref{eq:deep_to_linear} with  squared loss, we can decompose the generalization gap into three terms:
\begin{align} \label{eq:decomp_sq_loss}
{\mathcal R} [w^{\scalebox{.6}{$S$}}]- {\mathcal R}_{S}[w^{\scalebox{.6}{$S$}}]
&= \sum_{k=1}^{d_y}\sbr{(\bar w_k^{\scalebox{.5}{$S$}})\T \rbr{\EE[zz\T]-\frac{1}{m}\sum_{i=1}^m z_iz_i\T} \bar w_k^{\scalebox{.5}{$S$}}} 
\\ \nonumber & \ \ \ + 2\sum_{k=1}^{d_y}\sbr{\rbr{\frac{1}{m}\sum_{i=1}^m y_{ik}z_i\T - \EE [y_{k}z\T]}\bar w_k^{\scalebox{.5}{$S$}}}
 \\ \nonumber & \ \ \ + \EE[y\T y]-\frac{1}{m}\sum_{i=1}^m y_{i}\T y_i. 
\end{align}
By manipulating each term, we obtain the desired statement. 
 See Appendix \ref{app:proof_determ_bound} for a complete proof.  
$\hfill \square$

\vspace{10pt}

In Theorem \ref{thm:deterministic_bound}, there is no concept of a hypothesis space. Instead, it indicates  that if the norm of the weights $\|\bar w_k^{\scalebox{.5}{$S$}}\|_2$ at the end of learning process with the actual given $S$ is small, then the generalization gap is small, even if the norm $\|\bar w_k^{\scalebox{.5}{$S$}}\|_2$ is unboundedly large  at anytime with any  dataset other  than $S$.  

Importantly, in Theorem \ref{thm:deterministic_bound}, there are  two other significant factors in addition to the norm of the weights $\|\bar w_k^{\scalebox{.5}{$S$}}\|_2$. First, the eigenvalues of $G$ and $v$ measure the concentration of the given dataset $S$ with respect to the (unknown) $\PP_{(X,Y)}$ in the space of the learned representation $z_i=[\bar x_i \circ \bar \sigma (x_i,w^{\scalebox{.6}{$S$}})]$. Here, we can see the benefit of deep learning from the viewpoint of ``deep-path'' feature learning: even if a given $S$ is not concentrated in the original space, optimizing $w$ can result in concentrating it in the  space of $z$. Similarly,  $c_y$ measures the concentration of $\|y\|_2^2$, but $c_y$ is independent of $w$ and unchanged after a pair $(\PP_{(X,Y)},S)$ is given. Second, the $\cos \theta$ terms measure the similarity between  $\bar w_k^{\scalebox{.5}{$S$}}$  and these concentration terms. Because the norm of the weights $\|\bar w_k^{\scalebox{.5}{$S$}}\|_2$ is multiplied by those other factors, the generalization gap can remain small, even if $\|\bar w_k^{\scalebox{.5}{$S$}}\|_2$ is large, as long as  some of those other factors are small.

Based on  a generic bound-based theory, \citet{neyshabur2015path,neyshabur2015norm} proposed to control the norm of the \textit{path} weights  $\|\bar w_k\|_2$, which is consistent with our direct bound-less result (and which is  as computationally tractable as a  standard forward-backward pass\footnote{From the derivation of Equation \eqref{eq:deep_to_linear}, one can compute $\|\bar w_k^{\scalebox{.5}{$S$}}\|_2^2$ with a single forward pass using element-wise squared weights, an identity input, and no nonlinearity. One can also follow the  previous paper \citep{neyshabur2015path} for its computation.}).
Unlike the previous results, we do \textit{not} require a pre-defined bound on    $\|\bar w_k\|_2$ over different datasets, but depend only on its final value with each $S$ as desired, in addition to more tight insights (besides  the norm) via equality as discussed above. In addition to the pre-defined norm bound, these previous results have an explicit exponential dependence on the depth of the network, which does not appear in our Theorem 4. Similarly, some  previous results specific to layered networks without skip connections \citep{sun2016depth,xie2015generalization} contain the  $2^{L-1}$ factor \textit{and} a bound on the product of the norm of weight matrices,  $\prod_{l=1}^{L} \|W^{(l)}\|$, instead of $\sum_k\|\bar w_k^{\scalebox{.5}{$S$}}\|_2$. Here,  $\sum_k\|\bar w_k\|_2^{2} \le\prod_{l=1}^{L} \|W^{(l)}\|_F^{2}$ because the latter contains all of the same terms as the former as well as additional non-negative additive terms after expanding the sums in the definition of the norms.        

Therefore, unlike previous bounds, Theorem \ref{thm:deterministic_bound} generates these new theoretical insights based  on \textit{the tight equality} (in the first line of the equation in Theorem \ref{thm:deterministic_bound}). Notice that without manipulating the generalization gap, we can always obtain equality.  However, the question answered here is whether or not we can obtain competitive theoretical insights (the path norm bound) via equality instead of inequality. From a practical view point, if the obtained insights are the same (e.g., regularize the norm), then equality-based theory has the obvious advantage of being more precise.

\subsection{Probabilistic bound over random datasets} \label{sec:sub_data-independent}

While the previous subsection tightly analyzed each given point $(\PP_{(X,Y)},S)$, this subsection considers the set $ \mathcal P \times D \ni(\PP_{(X,Y)},S)$, where $D$ is the set of possible datasets $S$ endowed with an i.i.d. product measure $\PP_{(X,Y)}^{m}$ where\ $\PP_{(X,Y)} \in \mathcal P$ (see Section \ref{sec:reslove_ge_puzzle}).

In Equation \eqref{eq:decomp_sq_loss}, the generalization gap is decomposed into  three terms, each of which contains the difference between a sum of \textit{dependent} random variables and its expectation. 
The dependence comes  from the fact that     $z_{i}=[\bar x_i \circ \bar \sigma (x_i,w^{\scalebox{.6}{$S$}})]$  are dependent over the sample index $i$, because of the dependence of $w^{\scalebox{.6}{$S$}}$ on the entire dataset $S$. We then observe the following:  in $z^{[L]}_{k}(x,w) = [\bar x \circ \bar \sigma (x,w)]^\top \bar w$, 
the derivative  of $z=[\bar x \circ \bar \sigma (x,w)]$ with respect to $w$ is zero everywhere (except for the measure zero set, where the derivative does not exist). Therefore, each step of  the (stochastic) gradient decent greedily chooses the  best direction in terms of $\bar w$ (with the current $z=[\bar x \circ \bar \sigma (x,w)]$), but not in terms of $w$ in $z=[\bar x \circ \bar \sigma (x,w)]$  (see Appendix \ref{app:gradient_no_activation} for more detail). This observation leads to a conjecture that  the dependence of $z_i=[\bar x_i \circ \bar \sigma (x_i,w^{\scalebox{.6}{$S$}})]$ via the training process with the whole dataset   $S$ is not entirely ``bad''in terms of the concentration of  the sum of the terms with $z_{i}$. 

\subsubsection{Empirical observations} \label{sec:theory_sub_emp-obs} 

\begin{figure}[t!] 
\labellist 
\pinlabel \rotatebox{90}{Test accuracy ratio} [r] at 0 150
\pinlabel $\alpha$ [t] at 280 10
\endlabellist
\centering
\includegraphics[width=0.5\columnwidth]{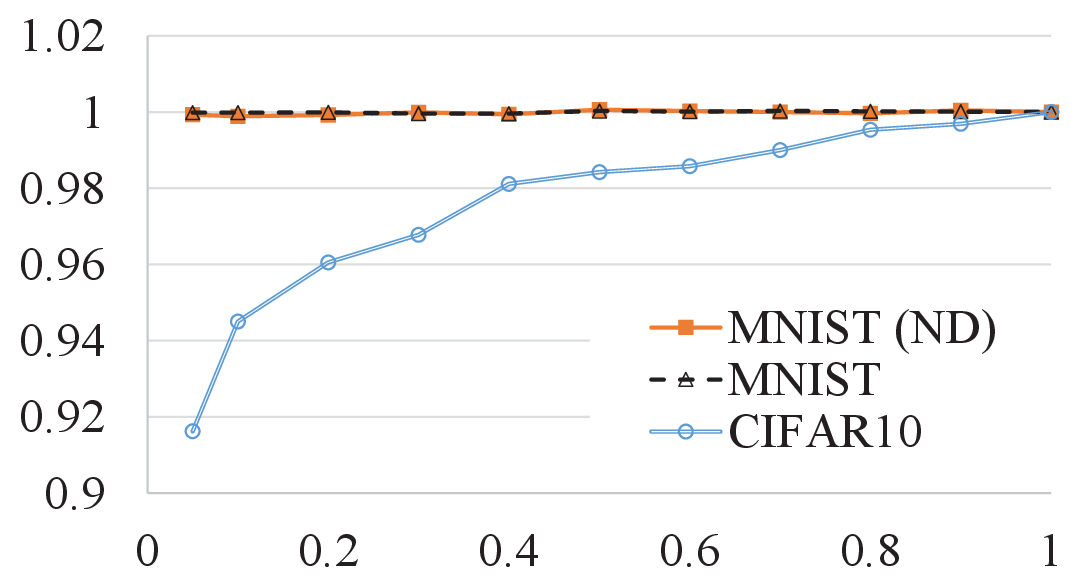} 
\caption{Test accuracy ratio (Two-phase/Base). Notice that the y-axis starts with  high initial accuracy, even with a very small dataset size  $\alpha m$ for learning $w_\sigma$.}
\label{fig:freeze} 
\end{figure}

As a first step to investigate the dependence of $z_i$, we evaluated the following novel \textit{two-phase} training procedure that explicitly breaks the dependence of $z_i$ over the sample index $i$. We first train a network in a standard way, but only using a \textit{partial} training dataset $S_{\alpha m}=\{(x_{1},y_{1}),\dots,(x_{\alpha m},y_{\alpha m})\}$ of size   $\alpha m$, where $\alpha \in (0,1)$ (standard phase). We then assign the value of $w^{\scalebox{.6}{$\mathcal S_{\alpha m}$}}$ to a new  placeholder $w_\sigma:=w^{\scalebox{.6}{$\mathcal S_{\alpha m}$}}$ and freeze $w_\sigma$, meaning that as $w$ changes, $w_\sigma$ does not change. At this point, we have that 
$
z^{[L]}_{k}(x,w^{\scalebox{.6}{$\mathcal S_{\alpha m}$}})=[\bar x \circ \bar \sigma (x,w_{\sigma})]^\top \bar w_k^{\scalebox{.6}{$\mathcal S_{\alpha m}$}}.
$
We then  keep training only the $\bar w_k^{\scalebox{.6}{$\mathcal S_{\alpha m}$}}$  part with the entire training dataset of size $m$ (freeze phase), yielding the final model via this two-phase training procedure as 
\begin{align} \label{eq:two-phase}
\tilde z^{[L]}_{k}(x,w^{\scalebox{.6}{$\mathcal S$}})=[\bar x \circ \bar \sigma (x,w_{\sigma})]^\top \bar w_k^{\scalebox{.6}{$\mathcal S$}}.
\end{align}
 Note that the vectors $w_\sigma=w^{\scalebox{.6}{$\mathcal S_{\alpha m}$}}$ and $\bar w_k^{\scalebox{.6}{$\mathcal S$}}$ contain  the untied parameters  in $\tilde z^{[L]}_{k}(x,w^{\scalebox{.6}{$\mathcal S$}})$. See Appendix \ref{app:implement_two-phase-train} for a simple   implementation of this two-phase training procedure that requires  at most (approximately) twice as much computational cost as the normal   training procedure. 

We implemented the two-phase training procedure with the MNIST and CIFAR-10 datasets. The test accuracies of the standard training procedure (base case) were 99.47\% for MNIST (ND), 99.72\% for MNIST, and 92.89\% for CIFAR-10. MNIST (ND) indicates MNIST with no data augmentation. The experimental details are in Appendix \ref{app:exp_setting1}. 

Our source code is available at: \url{http://lis.csail.mit.edu/code/gdl.html}  

Figure \ref{fig:freeze} presents the test accuracy ratios for varying $\alpha$: the test accuracy of the two-phase training procedure divided by the test accuracy of the standard training procedure.  The plot in Figure \ref{fig:freeze} begins with $\alpha=0.05$, for which $\alpha m=3000$ in MNIST\ and $\alpha m = 2500$ in CIFAR-10. Somewhat surprisingly, using a much smaller dataset for learning $w_\sigma$ still resulted in competitive performance. A dataset from which we could more easily obtain a better generalization (i.e., MNIST) allowed us to use smaller $\alpha m$ to achieve competitive performance, which is consistent with our discussion above.

\subsubsection{Theoretical results}

We now prove a probabilistic bound for the hypotheses resulting from the two-phase training algorithm. Let $\tilde z_i=[\bar x_i \circ \bar \sigma (x_i,w_{\sigma})]$ where $w_\sigma:=w^{\scalebox{.6}{$\mathcal S_{\alpha m}$}}$, as defined in the two-phase training procedure above. Our two-phase training procedure forces $\tilde z_{\alpha m+1},\dots,\tilde z_{m}$ \textit{over samples} to be independent random variables (each $\tilde z_i$ is dependent \textit{over} \textit{coordinates}, which is taken care of in our proof), while maintaining the competitive practical performance of the output model $\tilde z^{[L]}_{k}(\cdot \hspace{0.5pt},w^{\scalebox{.6}{$\mathcal S$}})$. 
As a result, we obtain the following  bound on the generalization gap for the practical deep models $\tilde z^{[L]}_{k}(\cdot \hspace{0.5pt},w^{\scalebox{.6}{$\mathcal S$}})$. Let $m_{\sigma}=(1-\alpha)m$.
 Given a matrix $M$, let $\|M\|_2$ be the spectral norm of  $M$.

\begin{assumption} \label{assumpt:direct1}
 Let $G^{(i)}=\EE_x[\tilde z\tilde z\T]-\tilde z_{i}\tilde z_{i}\T$, $V^{(i)}_{kk'}=y_{ik}\tilde z_{i,k'}-\EE_{x,y} [y_{k} \tilde z_{k'}]$, and $c_{y}^{(i)}=\EE_y[\|y\|^2_2] - \|y_{i}\|_2^2$. Assume that for all $i \in \{\alpha m+1,\dots,m\}$,
\begin{itemize}
\item 
$C_{zz}\ge \lambda_{\max}(G^{(i)}) $  and $\gamma^2_{zz} \ge \| \EE _{x}[(G^{(i)})^2] \|_2$
\item  
\small 
$C_{yz} \ge \max_{k,k'} |V^{(i)}_{kk'}|$ and $\gamma^2_{yz} \ge \max_{k,k'} \EE _{x}[(V^{(i)}_{kk'})^2] )$
\normalsize
\item  
$ C_{y} \ge |c_y^{(i)}| $  and $\gamma^2_{y} \ge \EE _{x}[(c_y^{(i)})^2] $.
\end{itemize}
\end{assumption}

\begin{theorem} \label{thm:theoretical_concern_regression}
Suppose that Assumption \ref{assumpt:direct1} holds. Assume that  $S \setminus S_{\alpha m}$ is  generated by i.i.d. draws according to true distribution $\PP_{(X,Y)}$. Assume that $S \setminus S_{\alpha m}$ is independent of $S_{\alpha m}$. Let $f_{\mathcal A(S)}$ be the model learned by the two-phase training procedure with $S$. Then, for each   $w_\sigma:=w^{\scalebox{.6}{$\mathcal S_{\alpha m}$}}$, for any $\delta>0$, with probability at least $1-\delta$,
\begin{align*}
{\mathcal R}[f_{\mathcal A(S)}] - {\mathcal R}_{S \setminus S_{\alpha m}}[f_{\mathcal A(S)}]  
\le  \beta_1\sum_{k=1}^{d_y}\left\|\bar w_k^{\scalebox{.65}{$S$}}\right\|_1 + 2 \beta_2 \small \sum_{k=1}^{d_y}\left\|\bar w_k^{\scalebox{.65}{$S$}}\right\|_2^2+\beta_3, 
\end{align*} 
where
$
\beta_1=\frac{2C_{zz}}{3m_{\sigma }} \ln \frac{3d_{z}}{\delta} + \sqrt{\frac{2\gamma^2_{zz}}{m_{\sigma}} \ln \frac{3d_{z}}{\delta}},
$ 
$
\beta_2=\frac{2C_{yz}}{3m_{\sigma}} \ln \frac{6d_{y}d_z}{\delta} + \sqrt{\frac{\gamma^2_{yz}}{m_{\sigma}} \ln \frac{6d_{y}d_z}{\delta}},
$ 
and 
$
\beta_2=\frac{2C_{y}}{3m_{\sigma}} \ln \frac{3}{\delta} + \sqrt{\frac{2\gamma^2_{y}}{m_{\sigma}} \ln \frac{3}{\delta}}.
$
\end{theorem}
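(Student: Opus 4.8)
The plan is to start from the exact loss decomposition in Equation~\eqref{eq:decomp_sq_loss}, applied to the two-phase model $\tilde z^{[L]}_{k}(\cdot,w^{\scalebox{.6}{$\mathcal S$}})$ and to the sub-sample $S \setminus S_{\alpha m}$ of size $m_\sigma = (1-\alpha)m$. Writing the generalization gap ${\mathcal R}[f_{\mathcal A(S)}] - {\mathcal R}_{S\setminus S_{\alpha m}}[f_{\mathcal A(S)}]$ in the same three-term form, each term is a quadratic or bilinear form in the frozen features $\tilde z_i = [\bar x_i \circ \bar\sigma(x_i,w_\sigma)]$ contracted against the path-weight vectors $\bar w_k^{\scalebox{.6}{$S$}}$. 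Concretely, the first term is $\sum_k (\bar w_k^{\scalebox{.6}{$S$}})^\top \big(\frac{1}{m_\sigma}\sum_i G^{(i)}\big)\bar w_k^{\scalebox{.6}{$S$}}$, the second is $2\sum_k$ of a bilinear form against $\frac{1}{m_\sigma}\sum_i V^{(i)}$, and the third is $\frac{1}{m_\sigma}\sum_i c_y^{(i)}$, where $G^{(i)}$, $V^{(i)}_{kk'}$, and $c_y^{(i)}$ are exactly the zero-mean random matrices/scalars introduced in Assumption~\ref{assumpt:direct1}. The crucial structural point, emphasized in Section~\ref{sec:sub_data-independent}, is that because $w_\sigma$ is frozen and depends only on $S_{\alpha m}$, the summands $\tilde z_i$ for $i \in \{\alpha m+1,\dots,m\}$ are i.i.d.\ and independent of $w_\sigma$; hence, conditioning on $w_\sigma$, each of $G^{(i)}$, $V^{(i)}$, $c_y^{(i)}$ is a sum of i.i.d.\ mean-zero terms to which a concentration inequality applies.

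Next I would bound each of the three centered averages separately and then recombine. For the first (quadratic) term I would pull the weights out via $\big|(\bar w_k)^\top (\frac{1}{m_\sigma}\sum_i G^{(i)}) \bar w_k\big| \le \|\frac{1}{m_\sigma}\sum_i G^{(i)}\|_2 \,\|\bar w_k\|_2^2$, so it suffices to control the spectral norm of the average of the i.i.d.\ symmetric matrices $G^{(i)}$; this is exactly a matrix Bernstein estimate, for which the variance proxy $\gamma_{zz}^2 \ge \|\EE_x[(G^{(i)})^2]\|_2$ and the uniform bound $C_{zz} \ge \lambda_{\max}(G^{(i)})$ from Assumption~\ref{assumpt:direct1} are the required inputs, yielding a bound of the form $\beta_1 = \frac{2C_{zz}}{3m_\sigma}\ln\frac{3d_z}{\delta} + \sqrt{\frac{2\gamma_{zz}^2}{m_\sigma}\ln\frac{3d_z}{\delta}}$ on a $1-\delta/3$ event. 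For the second (bilinear) term I would bound it entrywise by $2\sum_k \sum_{k'} |(\frac{1}{m_\sigma}\sum_i V^{(i)}_{kk'})|\,|(\bar w_k)_{k'}|$ and apply a scalar Bernstein inequality coordinatewise (with $C_{yz}$, $\gamma_{yz}^2$), taking a union bound over the $d_y d_z$ coordinates and collecting the result into $2\beta_2 \sum_k \|\bar w_k\|_1$ — note the $\ell_1$ norm on the weights arises naturally here from the entrywise/Hölder step, matching the statement. For the third term, a single scalar Bernstein inequality on $\frac{1}{m_\sigma}\sum_i c_y^{(i)}$ gives the additive constant $\beta_3$. Allocating $\delta/3$ to each piece and taking a union bound produces the overall $1-\delta$ guarantee.

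The main obstacle, and the step deserving the most care, is the concentration of the \emph{matrix-valued} first term: I would invoke the matrix Bernstein inequality (for sums of independent, bounded, symmetric random matrices of dimension $d_z$), which is why the $\ln\frac{3d_z}{\delta}$ dimensional factor appears in $\beta_1$. The delicacy is twofold — first, confirming that $G^{(i)}$ is genuinely mean-zero given $w_\sigma$ (it is, since $\EE_x[\tilde z\tilde z^\top]$ is the per-sample expectation and the $\tilde z_i$ share that law), and second, correctly tracking that each $\tilde z_i$ is dependent across its own coordinates while the relevant independence is only across the sample index $i$; the assumption's second-moment and sup bounds are precisely tailored so that matrix Bernstein applies without needing coordinatewise independence. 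I would also double-check the bookkeeping of constants so that the three failure probabilities sum to $\delta$ and the numerical factors ($\frac{2}{3}$ and the variance coefficients $2\gamma_{zz}^2$, $\gamma_{yz}^2$, $2\gamma_y^2$) match those in the $\beta_i$. The remaining scalar Bernstein applications and the Hölder/union-bound manipulations for the bilinear term are routine once the matrix term is handled.
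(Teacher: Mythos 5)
Your proposal is correct and follows essentially the same route as the paper's proof: the same three-term decomposition of the gap over $S \setminus S_{\alpha m}$ into a quadratic, a bilinear, and a scalar part, matrix Bernstein (dimension factor $d_z$) for the first, scalar Bernstein with a union bound over the $d_y d_z$ coordinates for the second, scalar Bernstein for the third, and a $\delta/3$ allocation recombined by a union bound. Two small points: only the one-sided estimate $\bar w_k^\top A \bar w_k \le \lambda_{\max}(A)\,\|\bar w_k\|_2^2$ is needed, and the paper's matrix-Bernstein lemma bounds $\lambda_{\max}$ rather than the spectral norm, so you should drop the absolute value instead of paying for two-sidedness; also, your pairing of the $C_{zz}$-based constant with $\|\bar w_k\|_2^2$ and the $C_{yz}$-based constant with $\|\bar w_k\|_1$ is exactly what the paper's own proof yields --- the theorem statement as printed swaps these two norms (and mislabels $\beta_3$ as $\beta_2$), which is a typo, so do not ``correct'' your derivation to match it.
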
       

\vspace{5pt}

Our proof does \textit{not} require  independence over the coordinates of  $\tilde z_i$ and the entries of the random matrices $\tilde z_i \tilde z_i\T$(see the proof of Theorem \ref{thm:theoretical_concern_regression}). 

The bound in Theorem \ref{thm:theoretical_concern_regression}  is data-dependent because the norms of the weights $\bar w_k^{\scalebox{.65}{$S$}}$ depend on each particular $S$. Similarly to Theorem \ref{thm:deterministic_bound}, the bound in Theorem \ref{thm:theoretical_concern_regression}  does not contain a pre-determined bound on the norms of weights and can be independent of the  concept of  hypothesis space, as desired; i.e., Assumption \ref{assumpt:direct1} can be also satisfied without referencing a hypothesis space of $w$, because $\tilde z=[\bar x_i \circ \bar \sigma (x_i,w_{\sigma})]$ with $\bar \sigma_j (x_i,w_{\sigma}) \in \{0,1\}$. However, unlike  Theorem \ref{thm:deterministic_bound}, Theorem \ref{thm:theoretical_concern_regression}   \textit{implicitly} contains the properties of datasets different from a given $S$, via the pre-defined bounds in  Assumption \ref{assumpt:direct1}. This is expected since Theorem \ref{thm:theoretical_concern_regression}  makes claims about the set of random datasets $S$ instead of each instantiated $S$. Therefore, while Theorem \ref{thm:theoretical_concern_regression}   presents a strongly-data-dependent bound (over random datasets),  Theorem \ref{thm:deterministic_bound} is tighter for each given $S$; indeed, the main equality of Theorem \ref{thm:deterministic_bound}  is as tight as possible.

Theorems \ref{thm:deterministic_bound} and \ref{thm:theoretical_concern_regression} provide generalization  bounds for practical deep learning models that  do not necessarily have explicit dependence on the number of weights, or exponential dependence on depth  or effective input dimensionality. Although the size of the vector  $\bar w_k^{\scalebox{.65}{$S$}}$  can be exponentially large in the depth of the network, the norms of the vector need not be. Because $\tilde z^{[L]}_{k}(x,w^{\scalebox{.65}{$S$}})= \|\bar x \circ \bar \sigma (x,w_\sigma)\|_{2} \|\bar w_k^{\scalebox{.65}{$S$}} \|_2 \cos \theta$, we have that $\|\bar w_k^{\scalebox{.65}{$S$}} \|_2 = z^{[L]}_{k}(x,w) / (\|\bar x \circ \bar \sigma (x,w_\sigma)\|_{2}\cos \theta)$ (unless the denominator is zero), where $\theta$ is the angle between $\bar x \circ \bar \sigma (x,w_\sigma)$ and $\bar w_k^{\scalebox{.65}{$S$}}$. Additionally, as discussed in Section \ref{sec:data_dependent_bounds}, $\sum_k\|\bar w_k\|_2^{2} \le  \prod_{l=1}^{L} \|W^{(l)}\|_F^{2}$.

\subsection{Probabilistic bound for 0-1 loss with multi-labels} \label{sec:classification_direct}
For the 0--1 loss with multi-labels, the two-phase training procedure in Section \ref{sec:sub_data-independent} yields the generalization bound in Theorem \ref{thm:theoretical_concern_classification}. Similarly to the bounds in Theorems \ref{thm:deterministic_bound} and \ref{thm:theoretical_concern_regression}, the generalization  bound  in  Theorem \ref{thm:theoretical_concern_classification}   does not necessarily have dependence on the number of weights, and exponential dependence on depth and effective input dimensionality.

\begin{theorem} \label{thm:theoretical_concern_classification}
Assume that  $S \setminus S_{\alpha m}$ is  generated by i.i.d. draws according to true distribution $\PP_{(X,Y)}$. Assume that $S \setminus S_{\alpha m}$ is independent of $S_{\alpha m}$. Fix $\rho>0$ and $w_\sigma$. Let ${\mathcal F}$ be the set of the models with the two-phase training procedure. Suppose that $\EE_x[ \|\bar x \circ \bar \sigma (x,w_\sigma)\|^2_2] \le C_{\sigma}^2$ and $\max_k \|\bar w_k\|_2 \le C_{w}$ for all  $f \in {\mathcal F}$. Then, for any $\delta>0$, with probability at least $1-\delta$,
the following holds for all $f \in {\mathcal F}$:
$$
{\mathcal R}[f] \le {\mathcal R}_{S \setminus S_{\alpha m}}^{(\rho)}[f] +\frac{2 d_{y}^2 (1-\alpha)^{-1/2}C_{\sigma}C_w}{\rho \sqrt{m_\sigma}}  + \sqrt \frac{\ln \frac{1}{\delta}}{2m_{\sigma}}.
$$
\end{theorem}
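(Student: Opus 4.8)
The plan is to exploit the fact that, once the first phase is complete, the frozen parameters $w_\sigma$ turn the network into a linear predictor over a \emph{fixed} feature map, so that the second-phase sample $S \setminus S_{\alpha m}$ plays the role of a fresh i.i.d.\ training set for an ordinary bounded-norm linear multi-class classifier. Concretely, by Equation~\eqref{eq:deep_to_linear} the frozen model of Equation~\eqref{eq:two-phase} is linear in the remaining trainable weights $\bar w_k$ over the feature vector $\tilde z(x) = \bar x \circ \bar \sigma (x,w_\sigma)$. Because $w_\sigma$ depends only on $S_{\alpha m}$ and $S \setminus S_{\alpha m}$ is assumed independent of $S_{\alpha m}$, the map $x \mapsto \tilde z(x)$ is deterministic with respect to the randomness of $S \setminus S_{\alpha m}$. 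This independence is exactly what the two-phase construction buys us, and it is what licenses applying standard uniform convergence over the i.i.d.\ sample $S \setminus S_{\alpha m}$ of size $m_\sigma$. The class ${\mathcal F}$ is thereby contained, coordinate by coordinate, in the bounded-norm linear class ${\mathcal H} = \{x \mapsto \bar w_k^\top \tilde z(x) : \|\bar w_k\|_2 \le C_w\}$.

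Next I would invoke the standard margin-based route to a $0$-$1$ guarantee. Writing $\mu_f(x,y) = f_y(x) - \max_{k\neq y} f_k(x)$ for the multi-class margin and $\Phi_\rho$ for the $\rho$-ramp (which is $\tfrac1\rho$-Lipschitz and satisfies $\mathbf{1}\{t\le 0\} \le \Phi_\rho(t) \le \mathbf{1}\{t\le\rho\}$), one has ${\mathcal R}[f] = \EE[\mathbf{1}\{\mu_f\le 0\}] \le \EE[\Phi_\rho(\mu_f)]$ and $\widehat{\EE}[\Phi_\rho(\mu_f)] \le {\mathcal R}_{S \setminus S_{\alpha m}}^{(\rho)}[f]$. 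A McDiarmid/symmetrization argument then yields, uniformly over $f\in{\mathcal F}$ and with probability at least $1-\delta$,
\begin{align*}
{\mathcal R}[f] \le {\mathcal R}_{S \setminus S_{\alpha m}}^{(\rho)}[f] + 2\,\mathfrak{R}_{m_\sigma}(\Phi_\rho\circ\mathcal{M}) + \sqrt{\frac{\ln(1/\delta)}{2m_\sigma}},
\end{align*}
where $\mathcal{M}$ is the class of margin functions $(x,y)\mapsto\mu_f(x,y)$; the confidence term already matches the last term of the claimed bound. Talagrand's contraction lemma removes $\Phi_\rho$ at the cost of the factor $\tfrac1\rho$, leaving $\tfrac{2}{\rho}\mathfrak{R}_{m_\sigma}(\mathcal{M})$.

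It then remains to bound $\mathfrak{R}_{m_\sigma}(\mathcal{M})$ by the single-coordinate linear Rademacher complexity. Decomposing $\mu_f$ into the correct-class score $f_y$ and the competitor $\max_{k\neq y}f_k$, and bounding the maximum over the $d_y$ classes by a sum, reduces $\mathcal{M}$ to the scalar class ${\mathcal H}$ up to a factor of order $d_y^2$ (one factor from the labels entering the margin, one from controlling the competitor maximum). For the scalar linear class, the classical linear-predictor bound together with Cauchy--Schwarz gives $\widehat{\mathfrak{R}}_{S \setminus S_{\alpha m}}({\mathcal H}) \le C_w\sqrt{\sum_i\|\tilde z_i\|_2^2}/m_\sigma$; taking expectations and applying Jensen with the assumption $\EE_x[\|\bar x\circ\bar\sigma(x,w_\sigma)\|_2^2]\le C_\sigma^2$ yields $\mathfrak{R}_{m_\sigma}({\mathcal H})\le C_\sigma C_w/\sqrt{m_\sigma}$. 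Substituting back produces the middle term and completes the argument.

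The main obstacle is the multi-class margin bookkeeping: getting the dependence on the number of labels correct when passing from the margin class $\mathcal{M}$, which contains a label-dependent maximum, to the single scalar class ${\mathcal H}$, and verifying that the Lipschitz contraction interacts cleanly with that maximum. The only genuinely new ingredient beyond a textbook linear-classifier argument is conceptual rather than technical---establishing that the learned feature map $\tilde z(\cdot\hspace{0.5pt},w_\sigma)$ may be treated as fixed because the two phases use independent data---so once that reduction is in place the remaining steps are standard.
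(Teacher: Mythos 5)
Your proposal is correct and follows essentially the same route as the paper: the paper likewise treats the frozen feature map $\tilde z(\cdot,w_\sigma)$ as fixed (thanks to the independence of $S \setminus S_{\alpha m}$ from $S_{\alpha m}$), invokes the multi-class margin bound of \citet{koltchinskii2002empirical} to get the $\frac{2d_y^2}{\rho m_\sigma}\mathfrak{R}'_{m_\sigma}({\mathcal F})$ and $\sqrt{\ln(1/\delta)/(2m_\sigma)}$ terms, and then bounds the Rademacher complexity of the bounded-norm linear class by $C_\sigma C_w \sqrt{m_\sigma}$ via exactly your Cauchy--Schwarz plus Jensen argument. The only difference is that the margin/contraction/class-decomposition machinery you sketch by hand is packaged in the paper as a citation, so your argument is just an unpacked version of the same proof.
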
 
Here, the  empirical margin  loss ${\mathcal R}_{S}^{(\rho)}[f]$ is defined as ${\mathcal R}_{S}^{(\rho)}[f]=\frac{1}{m}\sum_{i=1}^m  \allowbreak {\mathcal L}_{\margin,\rho}(f(x_i),y_i)$, where  ${\mathcal L}_{\margin,\rho}$ is defined as follows: 
 $$
 {\mathcal L}_{\margin,\rho}(f(x),y)={\mathcal L}_{\margin,\rho}^{(2)} ( {\mathcal L}_{\margin,\rho}^{(1)}(f(x),y))
 $$  
 where 
$$
{\mathcal L}_{\margin,\rho}^{(1)}(f(x),y)=z^{[L]}_{y} (x)- \max_{y \neq y'} z^{[L]}_{y'} (x)\in \RR,
$$
and 
$$
{\mathcal L}_{\margin,\rho}^{(2)}(t)=
\begin{cases} 0 & \text{if } \rho \le t \\
1-t/\rho &  \text{if } 0 \le t \le \rho \\
1 & \text{if } t \le 0.\\
\end{cases}
$$

\section{Discussions and  open problems} \label{sec:discussion_openp}

It is very difficult to make a detailed characterization of how well a specific hypotheses generated by a certain learning
algorithm will generalize, in the absence of detailed information about the given problem instance.
Traditional learning theory addresses this very difficult question and has developed bounds that are as tight
as possible given the generic information available. In this paper, we have worked toward drawing stronger conclusions
by developing theoretical analyses tailored for the situations with more detailed information, including  actual neural network structures, and  actual performance on a validation set.

Optimization and generalization in deep learning are closely related via the following observation: if we make optimization easier by changing model architectures,  generalization performance can be degraded, and vice versa. Hence, non-pessimistic  generalization theory discussed in this paper might allow more architectural choices  and assumptions in optimization theory.

Define the partial order of problem instances $(\PP_{},S,f)$ as
$$
(\PP,S,f) \le(\PP',S',f')  \ \ \ \Leftrightarrow \ \ \ {\mathcal R}_\PP[f] - {\mathcal R}_{S}[f] \le {\mathcal R}_{\PP'}[f'] - {\mathcal R}_{S'}[f'] 
$$
where ${\mathcal R}_\PP[f]$ is the expected risk with probability measure $\PP$. Then, any theoretical insights without the partial order preservation can  be  misleading as it can change the ranking of the preference of $(\PP,S,f)$. For example, theoretically motivated algorithms  can degrade actual performances when compared with heuristics, if the theory does not preserve the partial order of $(\PP,S,f)$. This observation suggests the following open problem.

\vspace{6pt}  
\noindent \textbf{Open Problem 3.} 
Tightly characterize the expected risk ${\mathcal R}[f]$ or the generalization gap  ${\mathcal R}[f] - {\mathcal R}_{S}[f]$ of a hypothesis $f$ with a pair $(\PP,S)$, producing theoretical insights while partially yet provably preserving the   partial order of  $(\PP,S,f)$. 
\vspace{6pt}

Theorem \ref{thm:deterministic_bound} partially addresses Open Problem 3 by preserving the exact ordering via equality without bounds. However,  it would  be beneficial to  consider a weaker notion of  order preservation to gain analyzability  with more useful insights as stated in  Open Problem 3.

Our discussion  with Proposition \ref{prop:ge_validation} and Remark \ref{rem:ge_validation_2}  suggests another open problem:  analyzing the role and influence of \textit{human intelligence} on generalization.  For example,  human intelligence seems to be able to often find good architectures (and other hyper-parameters)
that get low validation errors (without  non-exponentially large  $|{\mathcal F}_\val|$ in Proposition \ref{prop:ge_validation}, or a low complexity of $\mathcal L_{{\mathcal F}_{\val}}$ in Remark \ref{rem:ge_validation_2}). 
A close look at the deep learning literature seems to suggest that this question is fundamentally related to the process of science and engineering, because many  successful  architectures have been designed based on the physical properties and engineering priors of the problems at hand (e.g., hierarchical nature, convolution, architecture for motion such as that by \citealt{finn2016unsupervised}, memory networks, and so on).
While this is a hard question, understanding it would be beneficial to  further automate the role of human intelligence towards the goal of artificial intelligence.

\section*{Acknowledgements} 
We gratefully acknowledge support from NSF grants 1420316, 1523767 and 1723381,  from AFOSR FA9550-17-1-0165, from ONR grant N00014-14-1-0486, and from ARO grant W911 NF1410433, as well as support from NSERC, CIFAR and Canada Research Chairs. Any opinions, findings, and conclusions or recommendations expressed in this material are those of the authors and do not necessarily reflect the views of our sponsors.

\bibliography{all}
\bibliographystyle{apalike}

\renewcommand{\thesection}{A}
\setcounter{section}{0}
\renewcommand{\theequation}{A.\arabic{equation}}
\setcounter{equation}{0}
\renewcommand{\thefigure}{A.\arabic{figure}}
\setcounter{figure}{0}
\renewcommand{\thesubsection}{A\arabic{subsection}}

\section{Appendix: Additional discussions} \label{app:additional_discussions}

Theorem \ref{thm:deterministic_bound} address  Open Problem 2  with the limited applicability to  certain neural networks with squared loss. In contrast,  a  parallel study \citep{kawaguchi2018alt} presents a novel generic learning theory to address Open Problem 2 for general cases in machine learning. It would be  beneficial to explore both  a generic analysis \citep{kawaguchi2018alt} and a concrete analysis in deep learning to get theoretical insights that are tailored for  each particular  case.

In previous bounds with a hypothesis space ${\mathcal F}$, if we try different hypothesis spaces ${\mathcal F}$ depending on $S$, the basic proof breaks down. An easy recovery at the cost of an extra quantity in a bound is to take a union bound over all possible ${\mathcal F}_j$ for $j=1,2,\dots,$ where we pre-decide  $\{{\mathcal F}_j\}_j$ without dependence on $S$ (because simply considering the ``largest'' ${\mathcal F}\supseteq {\mathcal F}_j$ can result in a very loose bound for each ${\mathcal F}_j$). Similarly, if we need to try many $w_\sigma^{}:=w^{\scalebox{.6}{$\mathcal S_{\alpha m}$}}$ depending on the whole $S$ in Theorem  \ref{thm:theoretical_concern_regression},  we can  take a union bound over $w_\sigma^{(j)}$ for $j=1,2,\dots,$  where we pre-determine $\{w_\sigma^{(j)}\}_j$  without dependence on $S \setminus S_{\alpha m}$ but with dependence on $S_{\alpha m}$. We can do the same with Proposition \ref{prop:ge_validation} and Remark \ref{rem:ge_validation_2} to use many ${\mathcal F}_{\val}$ depending on the validation dataset $S_{m_\val}^{(\val)}$ with a predefined sequence. 
\subsection{Simple regularization algorithm}  \label{app:method}
In general, theoretical bounds  from statistical learning theory can be too loose to be directly used in practice.  In addition, many theoretical results in statistical learning theory end up suggesting to simply regularize some notion of smoothness of a hypothesis class. Indeed, by upper bounding a distance between two functions (e.g., a hypothesis and the ground truth function corresponding to expected true labels), one can immediately see  \textit{without  statistical learning theory} that regularizing some notion of smoothness of the hypothesis class  helps guarantees on generalization. Then, by the Occam's razor argument, one might prefer a simpler (yet still rigor) theory and a corresponding simpler algorithm. 

Accordingly, this subsection examines another simple regularization algorithm that directly regularize smoothness of the learned hypothesis. In this subsection, we focus on multi-class classification problems with  $d_{y}$ classes, such as object classification with images. Accordingly, we analyze the expected risk with $0$--$1$ loss as  
 $
 {\mathcal R}[f] = \EE_x[\mathbbm{1}\{f(x)=y(x)\}],
 $ 
where $f(x)=\argmax_{k \in \{1,\dots,d_{y}\}}(\allowbreak z^{[L]}_k(x))$ is the model prediction, and $y(x)\in \{1,\dots,d_{y}\}$ is the true label  of $x$.

This subsection proposes the following family of simple regularization algorithms: given any architecture and method, add a new regularization term for each mini-batch as  
$$
\text{\small loss = original loss} + \frac{\lambda}{\bar m}  \abr{\max_{\substack{k}}  \sum_{i=1}^{\bar m} \xi_i z_k^{[L]}(x_i)},  
$$
\normalsize
where $x_i$ is drawn from some distribution approximating the true distribution of $x$, $\xi_{1},\dots,\xi_{\bar m}$ are independently and uniformly drawn from $\{-1,1\}$, $\bar m$ is a mini-batch size and $\lambda$ is a hyper-parameter. Importantly, the approximation of the true distribution of $x$ is only used for regularization purposes  and hence needs not be  precisely accurate (as long as it plays its role for regularization). For example, it can be approximated by populations generated by a generative neural network and/or an extra data augmentation process. For simplicity, we call this  family of methods as Directly Approximately Regularizing Complexity (DARC).

In this paper,  we  evaluated only a very simple version of the proposed family of methods as a first step. That is, our experiments employed the following simple and easy-to-implement method, called DARC1:
\begin{align} \label{eq:reg_simplest}
\text{\small loss = original loss} + \frac{\lambda}{\bar m} \left(\max_{k } \sum_{i=1}^{\bar m} |z_k^{[L]}(x_i)|\right),  
\end{align} 
\normalsize
where $x_i$ is the $i$-th sample in the training mini-batch. The additional computational cost and programming effort due to this new regularization is  almost negligible because    $z_k^{[L]}(x_i)$ is already used in computing the original loss. This simplest version was derived by approximating the true distribution of $x$ with the empirical distribution of the training data.

\begin{table}[t!]
\centering
\caption{Test error (\%)   } \label{tbl:best_models} 
\begin{tabular}{lcc}
\toprule
Method & MNIST & CIFAR-10 \\
\midrule
Baseline & 0.26 & 3.52 \\
\midrule
DARC1 & \uline{0.20} & \uline{3.43} \\
\bottomrule
\end{tabular} 
\end{table} 

\begin{table}[t!]
\centering
\caption{Test error ratio (DARC1/Base)} \label{tbl:short-runs_mean_stdv}
\begin{tabular}{l|cc|cc|cc}
\toprule
\multirow{2}{*}{\small } &  \multicolumn{2}{|c|}{\small MNIST (ND)}  & \multicolumn{2}{|c|}{ MNIST} & \multicolumn{2}{|c}{CIFAR-10} \\
  & \small mean & \small stdv & \small mean &\small stdv &\small mean &\small stdv \\
\midrule
Ratio & 0.89 & 0.61  & 0.95 & 0.67 & 0.97 & 0.79     \\
\bottomrule
\end{tabular}
\end{table} 

\begin{table}[t!] 
\centering
\caption{Values of $\frac{1}{m} \left(\max_{k } \sum_{i=1}^{m} |z_k^{[L]}(x_i)|\right)$} \label{tbl:short-runs_reg-val} 
\begin{tabular}{l|cc|cc|cc}
\toprule
\multirow{2}{*}{\small Method} &  \multicolumn{2}{|c|}{\small MNIST (ND)}  & \multicolumn{2}{|c|}{ MNIST} & \multicolumn{2}{|c}{CIFAR-10} \\
  & \small mean & \small stdv & \small mean &\small stdv &\small mean &\small stdv \\
\midrule
Base &  17.2 & 2.40 & 8.85 & 0.60 & 12.2 & 0.32 \\
\midrule
\small DARC1  &  1.30 & 0.07 & 1.35 & 0.02 & 0.96 & 0.01 \\
\bottomrule
\end{tabular}
\end{table}

We evaluated the proposed method (DARC1) by simply adding the new regularization term in equation \eqref{eq:reg_simplest}  to existing standard codes for MNIST and CIFAR-10. A standard variant of LeNet \citep{lecun1998gradient} and ResNeXt-29($16\times64$d) \citep{xie2016aggregated} are used for MNIST and CIFAR-10, and compared with the addition
  of the studied regularizer. For all the experiments, we fixed $(\lambda/\bar m)=0.001$  with $\bar m =64$. We  used a single model without ensemble methods. The experimental details are in Appendix \ref{app:exp_setting1}. The source code is available at: \url{http://lis.csail.mit.edu/code/gdl.html}

Table \ref{tbl:best_models} shows the error rates comparable with previous results. To the best of our knowledge, the previous state-of-the-art classification error is 0.23\% for  MNIST with a single model \citep{sato2015apac} (and 0.21\%  with an ensemble by \citealt{wan2013regularization}). 
To further investigate the improvement, we ran 10 random trials with  computationally less expensive settings, to gather mean and standard deviation (stdv). For MNIST, we used fewer epochs with the same model. For CIFAR-10, we used a smaller model class (pre-activation ResNet with only 18 layers). Table \ref{tbl:short-runs_mean_stdv} summarizes the improvement ratio: the new model's error divided by the base model's error. 
We observed the improvements   for all cases. The test errors (standard deviations) of the base models were 0.53 (0.029) for MNIST (ND), 0.28 (0.024) for MNIST, and 7.11 (0.17) for CIFAR-10 (all in $\%$). 

Table \ref{tbl:short-runs_reg-val} summarizes the values of  the regularization term $\frac{1}{m} (\max_{k } \sum_{i=1}^{m} \allowbreak |z_k^{[L]}(x_i)|)$ for each obtained  model. The models learned with the proposed method were significantly different from the base models  in terms of this value. Interestingly, a comparison of the base cases for MNIST (ND) and MNIST shows that  data augmentation by itself  \textit{implicitly} regularized what we explicitly regularized in the proposed method.

\subsection{Relationship to other fields} \label{app:relation_other-fields}
The situation  where theoretical studies  focus on a set of problems and practical applications care  about each element in the set is prevalent in machine learning and computer science literature, not limited to the field of learning theory. For example,  for each practical problem instance $q\in Q$, the size of the set $Q$ that had been analyzed in theory for optimal exploration in Markov decision processes (MDPs) were  demonstrated to be frequently too pessimistic, and a methodology to partially mitigate the issue was  proposed \citep{kawaguchiAAAI2016}. Bayesian optimization  would suffer from  a pessimistic set $Q$ regarding each  problem instance $q\in Q$, the issue of which was partially mitigated  \citep{kawaguchiNIPS2015}. 

Moreover, characterizing a set of problems $Q$ only via a  worst-case instance $q'\in Q$ (i.e., worst-case analysis) is known to have several issues in theoretical computer science, and so-called \textit{beyond worst-case analysis} (e.g., smoothed analysis) is an active area of research to mitigate the issues.

\subsection{SGD chooses direction in terms of $\bar w$} \label{app:gradient_no_activation}
Recall that
$$
z^{[L]}_{k}(x,w) = z\T \bar w = [\bar x \circ \bar \sigma (x,w)]^\top \bar w_.
$$
Note that $\sigma (x,w)$ is 0 or 1 for max-pooling and/or ReLU nonlinearity. Thus, the derivative  of $z=[\bar x \circ \bar \sigma (x,w)]$ with respect to $w$ is zero everywhere (except at the measure zero set where the derivative does not exists). Thus, by the chain rule (and power rule), the gradient of the loss with respect to $w$ only contain the contribution from the derivative of $z^{[L]}_{k}$ with respect to $\bar w$, but not with respect to $w$ in $z$.     

\subsection{Simple implementation of two-phase training procedure} \label{app:implement_two-phase-train}
Directly implementing Equation \eqref{eq:two-phase} requires the summation over all paths, which can be computationally expensive. To avoid it, we implemented it by creating two deep neural networks, one of which defines $\bar w$ paths hierarchically, and another of  which defines $w_\sigma$ paths hierarchically, resulting in the computational cost at most (approximately) twice as much as the original cost of training standard deep learning models. We tied $w_\sigma$ and $\bar w$ in the two networks during standard phase, and untied them during freeze phase. 

Our source code is available at: \url{http://lis.csail.mit.edu/code/gdl.html}

The  computation  of the standard network without skip connection can be re-written as:   
\begin{align*}
z^{[l]}(x,w) &= \sigma^{[l]}(W^{[l]}z^{[l-1]}(x,w)) 
\\ &= \dot{\sigma}^{[l]}(W^{[l]}z^{[l-1]}(x,w)) \circ W^{[l]}z^{[l-1]}(x,w)
\\ &= \dot{\sigma}^{[l]}(W^{[l]}_\sigma z^{[l-1]}_{\sigma}(x,w)) \circ W^{[l]}z^{[l-1]}(x,w)
\end{align*}
where $W^{[l]}_\sigma:=W^{[l]}$, $z^{[l-1]}_{\sigma}:=\sigma(W^{[l]}_{\sigma}z^{[l-1]}_\sigma(x,w))$ and $\dot{\sigma}^{[l]}_{j}(W^{[l]}z^{[l-1]}(x,w)) \allowbreak = 1$ if the $j$-th unit at the $l$-th layer is active, $\dot{\sigma}^{[l]}_{j}(W^{[l]}z^{[l-1]}(x,w)) = 0$ otherwise.
Note that because $W^{[l]}_\sigma=W^{[l]}$, we have that $z^{[l-1]}_{\sigma}=z^{[l]}$ in standard phase and standard models. 

 In the two-phase training procedure, we created two networks for $W^{[l]}_\sigma \allowbreak z^{[l-1]}_{\sigma}(x,w)$ and $ W^{[l]} \allowbreak z^{[l-1]}(x,w)$ separately. We then set $W^{[l]}_\sigma=W^{[l]}$ during standard phase, and frozen $W^{[l]}_\sigma$ and only trained $W^{[l]}$  during freeze phase. By following  the same derivation of Equation \eqref{eq:deep_to_linear}, we can see that this defines the desired computation without explicitly computing the summation over all paths. By the same token, this applies to DAGs.

\renewcommand{\thesection}{B}
\setcounter{section}{0}
\renewcommand{\theequation}{B.\arabic{equation}}
\setcounter{equation}{0}
\renewcommand{\thefigure}{B.\arabic{figure}}
\setcounter{figure}{0}
\renewcommand{\thesubsection}{B\arabic{subsection}}

\section{Appendix: Experimental details} \label{app:exp_setting1}

\noindent\uline{For MNIST:} 

We used the following fixed architecture:  

\begin{enumerate}[label=(\roman*)]
\item 
Convolutional layer  with 32 filters with filter size of 5 by 5, followed
by max pooling of size of 2 by 2 and ReLU.
\item
Convolution layer with 32 filters with filter size of 5 by 5, followed
by max pooling of size of 2 by 2 and ReLU.
\item
Fully connected layer with output 1024 units, followed by ReLU and Dropout with its probability being 0.5.
\item
Fully connected layer with output 10 units.
\end{enumerate}

Layer 4 outputs $z^{[L]}$ in our notation. For training purpose, we use  softmax of $z^{[L]}$. Also, $f(x)=\argmax(z^{[L]}(x))$ is the label prediction.

We fixed learning rate to be 0.01, momentum coefficient to be 0.5, and optimization algorithm to be (standard) stochastic gradient decent (SGD). We fixed  data augmentation process as:   random crop with size 24,  random rotation up to $\pm 15$ degree, and scaling of 15\%. We used 3000 epochs for Table \ref{tbl:best_models}, and 1000 epochs for Tables \ref{tbl:short-runs_mean_stdv} and \ref{tbl:short-runs_reg-val}.

\vspace{10pt}\noindent \uline{For  CIFAR-10:} 

For data augmentation, we used random horizontal flip with probability 0.5 and random crop of size 32 with padding of size 4. 

For Table \ref{tbl:best_models}, we used ResNeXt-29($16\times64$d) \citep{xie2016aggregated}. We set initial learning rate to be 0.05 and decreased to $0.005$ at 150 epochs, and to 0.0005 at 250 epochs. We fixed momentum coefficient to be 0.9, weight decay coefficient to be $5\times 10^{-4}$, and optimization algorithm to be stochastic gradient decent (SGD) with Nesterov momentum. We stopped training at 300 epochs.  

For Tables \ref{tbl:short-runs_mean_stdv} and \ref{tbl:short-runs_reg-val}, we used pre-activation ResNet with only 18 layers (pre-activation ResNet-18) \citep{he2016identity}. We fixed learning rate to be 0.001 and momentum coefficient to be 0.9, and optimization algorithm to be (standard) stochastic gradient decent (SGD). We used 1000 epochs.

\renewcommand{\thesection}{C}
\setcounter{section}{0}
\renewcommand{\theequation}{C.\arabic{equation}}
\setcounter{equation}{0}
\renewcommand{\thefigure}{C.\arabic{figure}}
\setcounter{figure}{0}
\renewcommand{\thesubsection}{C\arabic{subsection}}

\section{Appendix: Proofs} \label{app:sec3}
We use the following lemma in the proof of Theorem \ref{thm:deterministic_bound}.  

\begin{lemma} \label{lem:matrix_Bernstein}
\emph{(Matrix  Bernstein inequality: corollary to theorem 1.4 in \citealt{tropp2012user})}
Consider a finite sequence $\{M_i\}$ of independent, random, self-adjoint matrices with dimension $d$. Assume that each random matrix satisfies that
$\EE[M_i] =0$ and $\lambda_{\max}(M_i) \le R$ almost surely. Let $\gamma^2=\|\sum_i \EE[M_i^2]\|_2$. Then, for any $\delta>0$, with probability at least $1-\delta$, 
$$
\lambda_{\max}\rbr{\sum_i M_i} \le   \frac{2R}{3} \ln \frac{d}{\delta} + \sqrt{2\gamma^2 \ln \frac{d}{\delta}}.
$$
\end{lemma}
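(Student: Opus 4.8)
The plan is to obtain this as a corollary of the exponential tail form of the matrix Bernstein inequality (Theorem 1.4 in \citealt{tropp2012user}) by inverting its tail bound into a high-probability statement. Under the stated hypotheses ($\EE[M_i]=0$, $\lambda_{\max}(M_i)\le R$ almost surely, and $\gamma^2 = \|\sum_i \EE[M_i^2]\|_2$), that theorem gives, for every $t \ge 0$,
$$
\PP\left\{\lambda_{\max}\left(\sum_i M_i\right) \ge t\right\} \le d\exp\left(\frac{-t^2/2}{\gamma^2 + Rt/3}\right).
$$
It therefore suffices to exhibit a threshold $t$ for which the right-hand side is at most $\delta$, and then to pass to the complementary event.

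First I would write $a := \ln(d/\delta)$ and observe that the right-hand side is at most $\delta$ exactly when $\frac{t^2/2}{\gamma^2 + Rt/3} \ge a$, i.e., when the quadratic inequality $t^2 - \frac{2aR}{3}t - 2a\gamma^2 \ge 0$ holds. The associated quadratic has positive root $t_0 = \frac{aR}{3} + \sqrt{\frac{a^2R^2}{9} + 2a\gamma^2}$, and the inequality is satisfied for all $t \ge t_0$. Applying subadditivity of the square root, $\sqrt{x+y}\le\sqrt{x}+\sqrt{y}$ for $x,y\ge 0$, bounds $t_0 \le \frac{2aR}{3} + \sqrt{2a\gamma^2}$, which is precisely the expression $\frac{2R}{3}\ln\frac{d}{\delta} + \sqrt{2\gamma^2\ln\frac{d}{\delta}}$ appearing in the statement. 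Choosing $t$ equal to this latter value makes the tail probability at most $\delta$, so with probability at least $1-\delta$ we have $\lambda_{\max}(\sum_i M_i)$ below the claimed bound.

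I expect no genuine obstacle here, as the argument is purely algebraic: the only care needed is in selecting the positive root of the quadratic and in the (loosening but convenient) use of square-root subadditivity that converts the exact root $t_0$ into the clean closed form of the statement. One minor point worth recording is that the manipulation presumes $a = \ln(d/\delta) \ge 0$, i.e. $\delta \le d$; outside this regime the probability guarantee is vacuous, and in the paper's applications $\delta$ is taken small, so this is immaterial.
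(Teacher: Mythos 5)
Your proposal is correct and follows essentially the same route as the paper's proof: invert Tropp's Theorem 1.4 tail bound by solving the resulting quadratic in $t$ and loosening the positive root via $\sqrt{x+y}\le\sqrt{x}+\sqrt{y}$. If anything, your write-up is slightly more careful than the paper's (you solve the inequality rather than the equality, select the positive root explicitly, and record the harmless sign condition $\ln(d/\delta)\ge 0$), but the underlying argument is identical.
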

\begin{proof}
Theorem 1.4 by \citet{tropp2012user} states that for all $t \ge 0,$
$$
\PP\sbr{\lambda_{\max} \rbr{\sum_i M_i} \ge t} \le d \cdot \exp \rbr{\frac{-t^2 /2}{\gamma^2 + Rt/3}}.
$$
Setting 
$
\delta= d\exp\rbr{-\frac{t^2/2}{\gamma^2+Rt/3}}
$
implies 
$$
-t^2+\frac{2}{3}R (\ln d/\delta) t +2 \gamma^2 \ln d/\delta =0. 
$$
Solving for $t$ with the quadratic formula and bounding the solution with the subadditivity   of square root on non-negative terms (i.e., $\sqrt{a+b} \le \sqrt a + \sqrt b$ for all $a,b\ge0$),
$$
t \le\frac{2}{3}R (\ln d/\delta)+2 \gamma^2 \ln d/\delta. 
$$
\end{proof}

\subsection{Proof of Theorem \ref{thm:linear-couter}}  \label{app:theorem_linear}

\begin{proof}
For any matrix $M$, let $\Col(M)$ and $\Null(M)$ be the column space and null space of $M$.
Since $\rank(\Phi) \ge m$ and $\Phi \in \RR^{m \times n}$, the set of its columns span $\RR^{m}$, which proves statement \ref{thm:linear-counter-i}. Let $w^* = w_1^* + w_2^*$ where  $\Col(w^{*}_1) \subseteq \Col(M^T)$ and $\Col(w_2^*) \subseteq \Null(M)$.  For statement \ref{thm:linear-counter-ii}, set the  parameter as $w := w^*_{1}+\epsilon C_1 + \alpha C_2$ where $\Col(C_1) \subseteq \Col(M^T)$,  $\Col(C_2) \subseteq \Null(M)$, $\alpha \ge 0$ and $C_2=\frac{1}{\alpha}w^*_2+\bar C_2$.  Since $\rank(M) < n$, $\Null(M) \neq \{0\}$ and  there exist non-zero $\bar C_2$. Then, 
$$
\hat Y(w) = Y + \epsilon\Phi C_{1}, 
$$ 
and 
$$
\hat Y_\test(w) = Y_\test + \epsilon\Phi_{\test}C_{1}. 
$$
By setting $A=\Phi C_{1}$ and $B=\Phi_{\test}C_{1}$ with a proper normalization of $C_1$ yields \ref{thm:linear-counter-ii-a} and \ref{thm:linear-counter-ii-b} in statement \ref{thm:linear-counter-ii} (note that $C_1$ has an arbitrary freedom in the bound on its scale because its only condition is $\Col(C_1) \subseteq \Col(M^T)$). At the same time with the same parameter, since $\Col(w^*_{1}+\epsilon C_1) \perp \Col(C_2)$, 
$$
\|w\|_{F}^2 =\|w^*_{1}+\epsilon C_1\|_F^2 +\alpha^2 \|C_2\|_{F}^2,  
$$
and
$$ 
\|w-w^*\|_F^2 =\|\epsilon C_1\|_F^2 +\alpha^2 \|\bar C_2\|_{F}^2,
$$    
which grows unboundedly as $\alpha \rightarrow \infty$ without changing $A$ and $B$, proving \ref{thm:linear-counter-ii-c} in statement \ref{thm:linear-counter-ii}.
\end{proof}

\subsection{Proof of Corollary \ref{coro:linear-counter}} \label{app:coro_linear}
\begin{proof}
It follows the fact that the proof in Theorem \ref{thm:linear-couter} uses the assumption of $n > m$ and $\rank(\Phi) \ge m$ only for statement \ref{thm:linear-counter-i}.
\end{proof}

\subsection{Proof of Theorem \ref{thm:deterministic_bound}} \label{app:proof_determ_bound}
\begin{proof}
From Equation \eqref{eq:deep_to_linear}, the squared loss of deep models for each  point $(x,y)$ can be rewritten as 
$$
\sum_{k=1}^{d_y}(z^\top \bar w_{k}-y_k)^2=\sum_{k=1}^{d_y}\bar w\T_{k}(z z^\top) \bar w_{k}-2y_{k}z^\top \bar w_{k} +y_k^{2}.
$$ 

Thus, from Equation \eqref{eq:deep_to_linear} with the squared loss, we can decompose the generalization gap into three terms  as
\begin{align*} 
{\mathcal R} [w^{\scalebox{.6}{$S$}}]- {\mathcal R}_{S}[w^{\scalebox{.6}{$S$}}]
&= \sum_{k=1}^{d_y}\sbr{(\bar w_k^{\scalebox{.5}{$S$}})\T \rbr{\EE[zz\T]-\frac{1}{m}\sum_{i=1}^m z_iz_i\T} \bar w_k^{\scalebox{.5}{$S$}}} 
\\ & \ \ \ + 2\sum_{k=1}^{d_y}\sbr{\rbr{\frac{1}{m}\sum_{i=1}^m y_{ik}z_i\T - \EE [y_{k}z\T]}\bar w_k^{\scalebox{.5}{$S$}}}
 \\ & \ \ \ + \EE[y\T y]-\frac{1}{m}\sum_{i=1}^m y_{i}\T y_i. 
\end{align*}
\vskip-0.4cmAs $G$ is a real symmetric matrix, we denote an eigendecomposition of $G$ as $G=U \Lambda U\T$ where the diagonal matrix $\Lambda$ contains  eigenvalues as $\Lambda_{jj}=\lambda_j$ with  the corresponding orthogonal eigenvector matrix $U$;  $u_j$ is the $j$-th column of $U$. Then,
\begin{align*}
(\bar w_k^{\scalebox{.5}{$S$}})\T G \bar w_k^{\scalebox{.5}{$S$}} =\sum_j \lambda_j (u_{j}\T \bar w_k^{\scalebox{.5}{$S$}})^{2} = \|\bar w_k^{\scalebox{.5}{$S$}}\|_2^{2}\sum_j \lambda_j   \cos^2 \theta_{\bar w_k^{\scalebox{.45}{$S$}},j}^{(1)},
\end{align*}
and 
\small
$$
\sum_j \lambda_j (u_{j}\T \bar w_k^{\scalebox{.5}{$S$}})^{2}\le \lambda_{\max}(G) \|U\T \bar w_k^{\scalebox{.5}{$S$}}\|^2_2=\lambda_{\max}(G) \|\bar w_k^{\scalebox{.5}{$S$}}\|^2_2.  
$$
\normalsize
Also, 
$$
v\T \bar w_k^{\scalebox{.5}{$S$}}=\|v\|_2\|\bar w_k^{\scalebox{.5}{$S$}}\|_2 \cos \theta^{(2)}_{\bar w_k^{\scalebox{.5}{$S$}}} \le\|v\|_2\|\bar w_k^{\scalebox{.5}{$S$}}\|_2.
$$ 

By using these, 
\begin{align*} 
&\ {\mathcal R} [w^{\scalebox{.6}{$S$}}]- {\mathcal R}_{S}[w^{\scalebox{.6}{$S$}}] - c_{y}  
\\ &= \scalebox{1.}{$
 \sum_{k=1}^{d_y} \left(2\|v\|_2\|\bar w_k^{\scalebox{.5}{$S$}}\|_2 \cos \theta^{(2)}_{\bar w_k^{\scalebox{.45}{$S$}}}+\|\bar w_k^{\scalebox{.5}{$S$}}\|_2^{2}\sum_j \lambda_j  \cos^2 \theta_{\bar w_k^{\scalebox{.45}{$S$}},j}^{(1)} \right)
$}
\\ & \le\sum_{k=1}^{d_y} \left(2\|v\|_2\|\bar w_k^{\scalebox{.5}{$S$}}\|_2+\lambda_{\max}(G) \|\bar w_k^{\scalebox{.5}{$S$}}\|^2_2 \right). 
\end{align*} 
\normalsize
\end{proof}

\subsection{Proof of Theorem \ref{thm:theoretical_concern_regression}}
\begin{proof}
We do \textit{not} require the independence over the coordinates of  $\tilde z_i$ and the entries of random matrices $\tilde z_i \tilde z_i\T$because of the definition of independence required  for matrix  Bernstein inequality (for $\frac{1}{m_{\sigma}}\sum_{i=1}^{m_{\sigma}} \tilde z_i \tilde z_i\T$) (e.g., see section 2.2.3 of \citealt{tropp2015introduction}) and because of a union bound over coordinates (for $\frac{1}{m_\sigma}\sum_{i=1}^{m_\sigma} y_{ik}\tilde z_{i}  $). We use the fact that  $\tilde z_{\alpha m+1},\dots,\tilde z_{m}$  are independent random variables \textit{over the sample index (although dependent over the coordinates)}, because a $w_\sigma:=w^{\scalebox{.6}{$\mathcal S_{\alpha m}$}}$ is fixed and independent from $x_{\alpha m+1},\dots,x_{m}$.  

From Equation \eqref{eq:decomp_sq_loss}, with the definition of induced matrix norm and the Cauchy-Schwarz inequality,
\begin{align} \label{eq:proof_probabilistic_bound}
&{\mathcal R} [f_{\mathcal A(S)}]- {\mathcal R}_{S \setminus S_{\alpha m}}[f_{\mathcal A(S)}] 
\\ \nonumber & \le \sum_{k=1}^{d_y} \left\|\bar w_k^{\scalebox{.65}{$S$}}\right\|_2^2 \lambda_{\max}\rbr{\EE[\tilde z\tilde z\T]-\frac{1}{m_\sigma}\sum_{i=\alpha m+1}^m \tilde z_i\tilde z_i\T }
\\ \nonumber & \ \ \ + 2\sum_{k=1}^{d_y}\|\bar w_k^{\scalebox{.65}{$S$}}\|_1 \left\|\frac{1}{m_{\sigma}}\sum_{i=\alpha m+1}^m y_{ik}\tilde z_i - \EE [y_{k}\tilde z] \right\|_\infty 
\\ \nonumber & \ \ \ + \EE[y\T y]-\frac{1}{m_\sigma}\sum_{i=\alpha m+1}^m y_{i}\T y_i.
\end{align}  
 
In the below, we bound each term of the right-hand side of the above with concentration inequalities. 

\uline{For the first term:} Matrix  Bernstein inequality (Lemma \ref{lem:matrix_Bernstein}) states that for any $\delta>0$, with probability at least $1-\delta/3$,
\begin{align*}
 \lambda_{\max}\rbr{{\EE[\tilde z\tilde z\T]-\frac{1}{m_\sigma}\sum_{i=\alpha m+1}^{m} \tilde z_{i}\tilde z_{i}\T }} 
\le    \frac{2C_{zz}}{3m_\sigma} \ln \frac{3d_{z}}{\delta} + \sqrt{\frac{2\gamma^2_{zz}}{m_\sigma} \ln \frac{3d_{z}}{\delta}}.
\end{align*}

Here, Matrix  Bernstein inequality was applied as follows. Let $M_i =(\frac{1}{m_\sigma} \allowbreak G^{(i)})$. Then, $\sum_{i=\alpha m+1}^{m} \allowbreak  M_i =\EE[\tilde z\tilde z\T]-\frac{1}{m_\sigma}\sum_{i=\alpha m+1}^{m} \tilde z_{i}\tilde z_{i}\T$. We have that $\EE[ M_i]=0$ for all $i$. Also, $\lambda_{\max}(M_i) \le \frac{1}{m_\sigma} C_{zz}$ and $\|\sum_i \EE [M^2_i]\|_2 \le\frac{1}{m_\sigma}\gamma^2_{zz}$.

\uline{For the second term:} We apply Bernstein inequality to each $(k,k') \in \{1,\dots,d_y\}\times \{1,\dots, d_z\}$ and take union bound over $d_yd_z$ events, obtaining that for any $\delta>0$, with probability at least $1-\delta/3$, for all $k \in \{1,2,\dots,d_y\}$,
\begin{align*}
 \nm{\frac{1}{m_\sigma}\sum_{i=\alpha m+1}^{m} y_{ik}\tilde z_{i} - \EE [y_{k}\tilde z] }_\infty 
 \le\frac{2C_{yz}}{3m_\sigma} \ln \frac{6d_{y}d_z}{\delta} + \sqrt{\frac{\gamma^2_{yz}}{m_\sigma} \ln \frac{6d_{y}d_z}{\delta}}  
\end{align*}

\uline{For the third term:}
From Bernstein inequality, with probability at least $1-\delta/3$,
$$
 \EE[y\T y]-\frac{1}{m_\sigma}\sum_{i=\alpha m+1}^{m} y_{i}\T y_i \le  \frac{2C_{y}}{3m} \ln \frac{3}{\delta} + \sqrt{\frac{2\gamma^2_{y}}{m} \ln \frac{3}{\delta}}.
$$

\uline{Putting together:}  Putting together, for a fixed (or frozen) $w_\sigma$, with probability at least $1-\delta$ (probability over $S \setminus S_{\alpha m}=\{(x_{\alpha m+1},y_{\alpha m+1}),\dots,(x_{m}, \allowbreak y_m)\}$),  we have that $$\lambda_{\max}\rbr{{\EE[\tilde z\tilde z\T]-\frac{1}{m_\sigma}\sum_{i=\alpha m+1}^{m} \tilde z_{i}\tilde z_{i}\T }} \le \beta_1$$, $\nm{\frac{1}{m_\sigma}\sum_{i=\alpha m+1}^{m} y_{ik}\tilde z_{i} - \EE [y_{k}\tilde z] }_\infty \le \beta_2$ (for all $k$), and $$\EE[y\T y]-\frac{1}{m_\sigma}\sum_{i=\alpha m+1}^{m} y_{i}\T y_i \le \beta_3.$$Since Equation \eqref{eq:proof_probabilistic_bound} always hold deterministically (with or without such a dataset), the desired statement of this theorem follows. 
\end{proof}

\subsection{Proof of Theorem \ref{thm:theoretical_concern_classification}}

\begin{proof}

Define $S_{m_\sigma}$ as
$$
S_{m_\sigma}=S \setminus S_{\alpha m}=\{(x_{\alpha m+1},y_{\alpha m+1}),\dots,(x_{m},y_m)\}.
$$
Recall the following fact: using the result by \citet{koltchinskii2002empirical}, we have that for any $\delta>0$, with probability at least $1-\delta$,
the following holds for all $f \in {\mathcal F}$:
$$
{\mathcal R}[f] \le {\mathcal R}_{S_{m_\sigma},\rho}[f] +\frac{2d_{y}^2}{\rho m_\sigma}  \mathfrak{R}'_{m_\sigma} ({\mathcal F}) + \sqrt \frac{\ln \frac{1}{\delta}}{2m_\sigma},
$$ 
where $\mathfrak{R}'_{m_\sigma} ({\mathcal F})$ is Rademacher complexity defined as
$$
\mathfrak{R}'_{m_\sigma} ({\mathcal F}) = \EE_{S_{m_\sigma},\xi} \sbr{\sup_{\substack{k,w}}  \sum_{i=1}^{m_\sigma} \xi_i z_k^{[L]}(x_i,w)}.
$$
Here, $\xi_{i}$ is the Rademacher variable, and the supremum  is taken over all $k \in \{1,\dots,d_y\}$ and all $w$  allowed in ${\mathcal F}$.
Then, for our parameterized hypothesis spaces with any frozen $w_\sigma$,
\begin{align*}
 \mathfrak{R}'_{m_\sigma} ({\mathcal F}) 
&= \EE_{S_{m_\sigma},\xi} \sbr{\sup_{\substack{k,\bar w_k}}  \sum_{i=1}^{m_\sigma} \xi_i[\bar x_i \circ \bar \sigma (x_i,w_\sigma)]^\top \bar w_{k}}
\\ & \le \EE_{S_{m_\sigma},\xi} \sbr{\sup_{\substack{k,\bar w_k^{}}}  \left\| \sum_{i=1}^{m_\sigma} \xi_i[\bar x_i \circ \bar \sigma (x_i,w_\sigma)]\right\|_2 \left\|\bar w_{k}\right\|_2} 
\\ & \le C_{w}\EE_{S_{m_\sigma},\xi} \sbr{ \left\| \sum_{i=1}^{m_\sigma} \xi_i[\bar x_i \circ \bar \sigma (x_i,w_\sigma)]\right\|_2 }. 
\end{align*}
Because square root is concave in its domain, by using Jensen's inequality and linearity of expectation, 
\begin{align*}
& \EE_{S_{m_\sigma},\xi} \sbr{ \left\| \sum_{i=1}^{m_\sigma} \xi_i[\bar x_i \circ \bar \sigma (x_i,w_\sigma)]\right\|_2 }
\\ & \le \left( \EE_{S_{m_\sigma}} \sum_{i=1}^{m_\sigma} \sum_{j=1}^{m_\sigma} \EE_{\xi}[\xi_i \xi_j] [\bar x_i \circ \bar \sigma (x_i,w_\sigma)]^\top [\bar x_j \circ \bar \sigma (x_j,w_\sigma)]  \right)^{1/2} 
\\ & = \left( \sum_{i=1}^{m_\sigma} \EE_{S_{m_\sigma}}\sbr{\left\|[\bar x_i \circ \bar \sigma (x_i,w_\sigma)] \right\|_2^2}  \right)^{1/2}
\\ & \le  C_{\sigma}  \sqrt {m_\sigma}.
\end{align*} 
\normalsize
Putting together, we have that $\mathfrak{R}'_m ({\mathcal F}) \le C_{\sigma}C_{w} \sqrt {m_\sigma}$.
\end{proof}

\subsection{Proof of Proposition \ref{prop:ge_validation}}
\begin{proof}
Consider a single fixed $f\in {\mathcal F}_\val$. Since ${\mathcal F}_\val$ is independent from the validation dataset, $\kappa_{f,1},\dots,\kappa_{f,m_\val}$ are independent zero-mean random variables, given a fixed $f\in {\mathcal F}_\val$. Thus, we can apply Bernstein inequality, yielding
$$   
\PP\rbr{\frac{1}{m_\val} \sum_{i=1}^{m_\val} \kappa_{f,i} > \epsilon} \le \exp\left(-\frac{\epsilon^2 m_\val /2}{\gamma^2+\epsilon C/3}\right).
$$
By taking union bound over all elements in ${\mathcal F}_\val$,
\begin{align*}
\PP\rbr{ \cup_{f\in {\mathcal F}_\val}\cbr{\frac{1}{m_\val} \sum_{i=1}^{m_\val} \kappa_{f,i} > \epsilon }} 
 \le |{\mathcal F}_{\val}|\exp\left(-\frac{\epsilon^2 m_\val /2}{\gamma^2+\epsilon C/3}\right).
\end{align*}
By setting $\delta= |{\mathcal F}_{\val}|\exp\left(-\frac{\epsilon^2 m_\val /2}{\gamma^2+\epsilon C/3}\right)$ and solving for $\epsilon$ (via quadratic formula), 
\small
$$
\epsilon = \frac{2C \ln (\frac{|{\mathcal F}_\val|}{\delta})}{6m_\val} \pm \frac{1}{2} \sqrt{\rbr{\frac{2C \ln (\frac{|{\mathcal F}_\val|}{\delta})}{3m_\val}}^2 + \frac{8\gamma^2 \ln (\frac{|{\mathcal F}_\val|}{\delta})}{{m_\val}}}.
$$ 
\normalsize
By noticing that the solution of $\epsilon$ with the minus sign results in $\epsilon<0$, which is invalid for Bernstein inequality, we obtain the valid solution with the plus sign. Then, we have 
$$
\epsilon \le\frac{2C \ln (\frac{|{\mathcal F}_\val|}{\delta})}{3m_\val} +  \sqrt{\frac{2\gamma^2 \ln (\frac{|{\mathcal F}_\val|}{\delta})}{{m_\val}}}, 
$$
where we used that $\sqrt{a+b} \le \sqrt a + \sqrt b$. By tanking the negation of the  statement, we obtain that for any $\delta>0$, with probability at least $1-\delta$, for all $f \in {\mathcal F}_\val$,
\begin{align*}
\frac{1}{m_\val} \sum_{i=1}^{m_\val} \kappa_{f,i} \le \frac{2C \ln (\frac{|{\mathcal F}_\val|}{\delta})}{3m_\val} +  \sqrt{\frac{2\gamma^2 \ln (\frac{|{\mathcal F}_\val|}{\delta})}{m}},
\end{align*}
where $\frac{1}{m_\val} \sum_{i=1}^{m_\val} \kappa_{f,i}={\mathcal R}[f] - {\mathcal R}_{S_{m_\val}^{(\val)}}[f]. $
\end{proof}

\renewcommand{\thesection}{\thechapter.\arabic{section}}
\setcounter{section}{2}
\renewcommand{\theequation}{\thechapter.\arabic{equation}}
\setcounter{equation}{0}
\renewcommand{\thefigure}{\thechapter.\arabic{figure}}
\setcounter{figure}{0}
\renewcommand{\thesubsection}{\thesection.\arabic{subsection}}

\end{document}